\documentclass[twoside]{article}

\usepackage{aistats2025}

%
%


\usepackage[square]{natbib}

\bibliographystyle{apalike}
\usepackage[utf8]{inputenc} 
\usepackage[T1]{fontenc}    
\usepackage{url}            
\usepackage{booktabs}       
\usepackage{nicefrac}       
\usepackage{microtype}      
\usepackage{xcolor}         
\usepackage{wrapfig}
\usepackage{subcaption}
\usepackage[ruled]{algorithm2e}
\usepackage{algorithmic}
\usepackage{amssymb,amsmath}
\usepackage{amsthm}
\usepackage{thmtools, thm-restate}
\usepackage[mathscr]{eucal} 
\usepackage{natbib}
\usepackage{bbm}
\usepackage{arydshln}
\usepackage{enumitem}
\usepackage{cancel}
\usepackage{graphicx}
\usepackage{mdframed}

\definecolor{mydarkblue}{rgb}{0,0.08,0.45}
\usepackage[colorlinks=true,
    linkcolor=mydarkblue,
    citecolor=mydarkblue,
    filecolor=mydarkblue,
    urlcolor=mydarkblue]{hyperref} 
\usepackage[capitalize]{cleveref}

\declaretheorem{lemma}

\declaretheorem{assumption}
\crefname{assumption}{assumption}{Assumptions}
\crefname{figure}{Figure}{Figure}

\mdfdefinestyle{mdframedthmbox}{%
	leftmargin=.0\textwidth,
	rightmargin=.0\textwidth,%
	innertopmargin=0.75em,
	innerleftmargin=.5em,
	innerrightmargin=.5em,
}

\setlength{\intextsep}{0.5\baselineskip}

\SetAlgoSkip{}

\usepackage{pifont}  
\newcommand{\cmark}{\textcolor{green}{\ding{51}}}
\newcommand{\xmark}{\textcolor{red}{\ding{55}}} 
\newcommand{\Alg}{\texttt{SPMA}}
\newcommand{\NPG}{\texttt{NPG}}
\newcommand{\MDPO}{\texttt{MDPO}}
\newcommand{\PPO}{\texttt{PPO}}
\newcommand{\SPG}{\texttt{SPG}}

\newcommand{\TRPO}{\texttt{TRPO}}
\newcommand{\TRPOC}{\texttt{TRPO-constrained}}
\newcommand{\TRPOR}{\texttt{TRPO-regularized}}

\newcommand{\X}{\textbf{X}}

\newcommand{\z}{z}
\newcommand{\zt}{z_{t}}

\newcommand{\ztt}{z_{t+1}}
\newcommand{\zth}{z_{t+\nicefrac{1}{2}}}

\newcommand{\inner}[2]{\langle #1, #2 \rangle}

\newcommand{\norm}[1]{\left\|#1\right\|}

\newcommand{\E}{\mathbb{E}}

\DeclareMathOperator*{\argmax}{arg\,max}
\DeclareMathOperator*{\argmin}{arg\,min}

\def \nus/{\texttt{NUS}}




\newcommand{\nleft}{\mathclose\bgroup\left}
\newcommand{\nright}{\aftergroup\egroup\right}
















\def\1{\bm{1}}









\DeclareMathAlphabet{\mathsfit}{\encodingdefault}{\sfdefault}{m}{sl}
\SetMathAlphabet{\mathsfit}{bold}{\encodingdefault}{\sfdefault}{bx}{n}











\newcommand{\R}{\mathbb{R}}









\newcommand{\tightsub}[1]{{\kern -.1em \raise-.1em\hbox{\tiny$#1$}}{}}










\DeclareMathAlphabet{\mathsfit}{\encodingdefault}{\sfdefault}{m}{sl}
\SetMathAlphabet{\mathsfit}{bold}{\encodingdefault}{\sfdefault}{bx}{n}


\def\cP{{\mathcal{P}}}

\def\cS{{\mathcal{S}}}



\def\ppi{{\pi}}
\def\ppit{{\pi_t}}
\def\ppith{{\pi_{t + \nicefrac{1}{2}}}}
\def\ppitt{\pi_{t+1}}

\def\dpi{{d^{\pi}}}

\def\dpit{{d^{\pi_t}}}

\def\dpitt{{d^{\pi_{t+1}}}}

\def\qpi{{Q^{\pi}}}
\def\qpit{{Q^{\pi_t}}}

\def\tpi{{T^\pi}}

\def\tpitt{{T^{\pi_{t+1}}}}

\def\jspit{{V^{\pi_t}(s)}}

\def\dbpitt{d^{\pi_{t+1/2}}}

\def\qpitt{{Q^{\pi_{t+1}}}}
\def\bppitt{{\pi_{t+1/2}}}

\def\api{{A^{\pi}}}
\def\apit{{A^{\pi_{t}}}}
\def\hapit{{\hat{A}^{\pi_{t}}}}

\def\prpi{{\Pr^{\pi}}}

\DeclareMathOperator{\sign}{sign}

\def\cA{{\mathcal{A}}}

\def\cM{{\mathcal{M}}}

\def\cH{{\mathcal{H}}}

\newcommand{\tht}{\theta_{t}}
\newcommand{\thtt}{\theta_{t+1}}

\newcommand{\norminf}[1]{\left\| #1\right\|_{\infty}}

\newcommand{\cZ}{\mathcal{Z}}

\setlength{\headheight}{10.71404pt}
\begin{document}

%

%

\twocolumn[

\aistatstitle{Fast Convergence of Softmax Policy Mirror Ascent}

\aistatsauthor{ Reza Asad\textsuperscript{1} \And Reza Babanezhad\textsuperscript{2} \And Issam Laradji\textsuperscript{3} \And Nicolas Le Roux\textsuperscript{4} \And Sharan Vaswani\textsuperscript{1}}

\aistatsaddress{\textsuperscript{1}Simon Fraser University \quad \textsuperscript{2} Samsung AI \quad \textsuperscript{3} ServiceNow \quad \textsuperscript{4}Mila, Universit\'e de Montr\'eal, McGill } ]

\begin{abstract}
Natural policy gradient ($\NPG$) is a common policy optimization algorithm and can be viewed as mirror ascent in the space of probabilities. Recently,~\citet{vaswani2021general} introduced a policy gradient method that corresponds to mirror ascent in the dual space of logits. We refine this algorithm, removing its need for a normalization across actions and analyze the resulting method (referred to as $\Alg$). For tabular MDPs, we prove that $\Alg$ with a constant step-size matches the linear convergence of $\NPG$ and achieves a faster convergence than constant step-size (accelerated) softmax policy gradient. To handle large state-action spaces, we extend $\Alg$ to use a log-linear policy parameterization. Unlike that for $\NPG$, generalizing $\Alg$ to the linear function approximation (FA) setting does not require compatible function approximation. Unlike $\MDPO$, a practical generalization of $\NPG$, $\Alg$ with linear FA only requires solving convex softmax classification problems. We prove that $\Alg$ achieves linear convergence to the neighbourhood of the optimal value function. We extend $\Alg$ to handle non-linear FA and evaluate its empirical performance on the MuJoCo and Atari benchmarks. Our results demonstrate that $\Alg$ consistently achieves similar or better performance compared to $\MDPO$, $\PPO$ and $\TRPO$.
\end{abstract}

\section{INTRODUCTION}
\label{sec:introduction}
Policy gradient (PG) methods~\citep{williams1992simple,sutton1999policy,konda2000actor,kakade2001natural} have been critical to the achievements of reinforcement learning (RL). Although the PG objective is non-concave, recent theoretical research~\citep{agarwal2021theory,mei2020global,mei2021leveraging,bhandari2021linear,lan2023policy,shani2020adaptive,liu2024elementary,lu2024towards,alfano2022linear,yuan2023linear} has analyzed PG methods in simplified settings and demonstrated their global convergence to an optimal policy. While such simplified analyses are helpful in understanding the underlying optimization issues, the resulting methods are rarely used in practice. On the other hand, while methods such as \texttt{TRPO}~\citep{schulman2015trust}, \texttt{PPO}~\citep{schulman2017proximal}, \texttt{MDPO}~\citep{tomar2020mirror} are commonly used in deep RL, their theoretical analysis in the function approximation setting is quite limited. In particular, existing work either (i) analyzes these methods only in the impractical tabular setting~\citep{tomar2020mirror,shani2020adaptive} or (ii) modifies these algorithms to make them more amenable to theoretical analysis~\citep{liu1906neural,zhong2024theoretical}. Unfortunately, these modified algorithms are quite different from the original variants and are not systematically benchmarked on standard environments. Consequently, there exists a large gap between PG methods that have theoretical guarantees in realistic settings versus those which are implemented in practice. To make matters worse, it has been demonstrated that code-level implementation details impact the empirical performance more than the underlying algorithm~\citep{engstrom2019implementation}. 

Designing theoretically principled PG algorithms that simultaneously have good empirical performance on the standard set of benchmarks is the main motivation behind this work. To that end, we leverage an algorithm first proposed by~\citet{vaswani2021general}, which we modify to remove the need for normalization. We coin this refinement \textbf{S}oftmax \textbf{P}olicy \textbf{M}irror \textbf{A}scent (referred to as $\Alg$). We show that $\Alg$ has comparable convergence guarantees as existing theoretical techniques~\citep{lu2024towards,yuan2023linear} in the tabular and function approximation settings, while achieving comparable practical performance as \texttt{PPO}, \texttt{TRPO} and \texttt{MDPO}, without additional algorithmic modifications. In particular, we make the following contributions. 

\textbf{Contribution 1}: In~\cref{sec:algorithm}, we focus on the multi-armed bandit and tabular MDP settings, where the number of parameters scales with the number of states and actions. We develop the $\Alg$ algorithm, which parameterizes the policy using the softmax function and uses a mirror ascent (with the log-sum-exp mirror map) update. Compared to $\NPG$ that can be viewed as mirror ascent in the space of probabilities, $\Alg$ corresponds to mirror ascent in the dual space of logits and does not require a normalization across actions. Given access to the exact policy gradients, we prove that $\Alg$ with a constant step-size converges to the optimal policy at a linear rate and thus matches the rate of $\NPG$~\citep{khodadadian2021linear,liu2024elementary}. In comparison, constant step-size softmax policy gradient ($\SPG$)~\citep{agarwal2021theory,mei2020global} can only achieve sublinear convergence rates even with Nesterov acceleration~\citep{chenaccelerated}. Hence, by changing the mirror map (from Euclidean to log-sum-exp) while using the same policy parameterization, $\Alg$ can result in an exponential improvement over $\SPG$. 

\textbf{Contribution 2}: In order to handle MDPs with large state-action spaces, we use function approximation (e.g. linear models or neural networks) to parameterize the policies resulting in the class of log-linear or energy-based policies~\citep{haarnoja2017reinforcement,agarwal2021theory,yuan2023linear} respectively. By interpreting the policy parameterization as a constraint on the corresponding logits, we use projected mirror ascent to extend $\Alg$ to the FA setting and design~\cref{alg:spma}. Unlike that for $\NPG$, generalizing $\Alg$ does not require compatible function approximation, and thus results in a more practical algorithm. Unlike $\MDPO$~\citep{tomar2020mirror} which results in non-convex surrogates for linear FA, $\Alg$ requires solving convex softmax classification problems in each iteration. 

\textbf{Contribution 3}: In~\cref{sec:fa-theory}, we state the conditions under which~\cref{alg:spma}  converges to the neighbourhood of the optimal value function, and characterize the resulting linear convergence rate. Hence, for log-linear policies,~\cref{alg:spma} matches the theoretical convergence of $\NPG$ with compatible function approximation~\citep{agarwal2021theory,alfano2022linear,yuan2023linear}. Our theoretical results are better than those in~\citet{vaswani2021general} and~\citet{schulman2015trust} which prove sublinear convergence to a stationary point for idealized variants of $\Alg$ and \texttt{TRPO} respectively. In contrast to~\citet{kuba2022mirror} which prove that the idealized variants of $\PPO$ and $\TRPO$ converge to the optimal policy asymptotically, we characterize the non-asymptotic convergence rate for~\cref{alg:spma}. 

\textbf{Contribution 4}: We empirically evaluate $\Alg$ across simple MDPs with tabular and linear parameterization, Atari games with a discrete action space and a neural policy parameterization with CNNs, and continuous control MuJoCo tasks with a continuous action space and a neural policy parameterization with MLPs. We demonstrate that $\Alg$ has consistently good performance -- on Atari games $\Alg$ achieves better results than both $\TRPO$ and $\PPO$ while matching or outperforming $\MDPO$, whereas on MuJoCo tasks, $\Alg$ outperforms $\PPO$ and achieves similar or better results than $\MDPO$. 





\vspace{-2ex}
\section{PROBLEM FORMULATION}
\vspace{-2ex}
We consider an infinite-horizon discounted Markov decision process (MDP)~\citep{puterman2014markov} defined by $\cM = \langle \cS , \cA, \cP, r, \rho, \gamma \rangle$, where $\cS$ and $\cA$ represent the states and actions, $\mathcal{P}: \cS \times \cA \rightarrow \Delta_{\cS}$ is the transition probability function, $r: \cS \times \cA \rightarrow [0, 1]$ is the reward function, $\rho \in \Delta_{\cS}$ is the initial state distribution, and $\gamma \in [0, 1)$ represents the discount factor. In this paper, we exclusively consider the setting where the number of states and actions is finite, but potentially large. 

Given $s \in \cS$, the policy $\pi$ induces a probability distribution $\ppi(. | s)$ over the actions. The action-value function $\qpi: \cS \times \cA \rightarrow \mathbb{R}$ induced by $\pi$ is defined as $\qpi(s, a) := \E[\sum_{t=0}^{\infty} \gamma^t r(s_t, a_t)| s_0= s , a_0 = a]$ where $s_t \sim p(. | s_{t-1}, a_{t-1})$, and $a_t \sim \ppi(.|s_t)$ for $t \ge 1$. The value function corresponding to $\qpi$ starting from state $s$ is defined as $V^{\pi}(s) := \E_{a \sim \ppi(. | s)}[\qpi(s, a)]$ with $J(\pi) := V^{\pi}(\rho) := \E_{s \sim \rho} [V^{\pi}(s)]$ representing the expected discounted cumulative reward. Furthermore, the advantage function $\api: \cS \times \cA \rightarrow \mathbb{R}$ is defined as $\api(s, a) := \qpi(s, a) - V^{\pi}(s)$. The policy also induces a discounted state-occupancy measure $\dpi(s) := (1 - \gamma) \sum_{t=0}^{\infty} \gamma^t \prpi[s_t = s | s_0 \sim \rho]$ over the states. The objective is to find an optimal policy $\pi^*$ that maximizes the expected reward $J(\pi) $, i.e. $\pi^* = \argmax_{\pi} J(\pi)$. As a special case, in the bandit setting, $|\cS| = 1$, $|\cA| = K$, $\gamma=0$, and $J(\pi)  = \langle \pi , r\rangle$, with $K$ representing the number of arms. 

\vspace{-2ex}
\section{SOFTMAX POLICY MIRROR ASCENT: TABULAR PARAMETRIZATION}
\label{sec:algorithm}
\vspace{-2ex}
Softmax policy mirror ascent (referred to as $\Alg$) represents the policy using the softmax function $h: \R^{A} \rightarrow \Delta_{A}$ i.e. $\pi(\cdot|s) = h(\z(s,\cdot))$ s.t. for all $(s,a) \in \cS \times \cA$, $\ppi(a|s)  = \frac{\exp(z(s,a))}{\sum_{a'}\exp(z(s, a'))}$, where the \textit{logits} $z$ are $SA$-dimensional vectors and $\Delta_{A}$ is the $A$-dimensional simplex. We first focus on the \textit{tabular parameterization} where the number of parameters scales with the number of states and actions, and aim to learn the logits corresponding to an optimal policy. With some abuse of notation, we use $J(\z)$ to refer to $J(\pi)$ where $\pi(\cdot|s) = h(z(s,\cdot))$ and state the objective as: $\max_{\z \in \R^{SA}} J(\z)$. 

As the name suggests, $\Alg$ uses mirror ascent (MA) to maximize $J(\z)$. For a differentiable, strictly convex mirror map $\Phi$, MA~\citep{beck2003mirror,bubeck2015convex} is an iterative algorithm whose update at iteration $t \in [T]$ can be stated in two equivalent ways:   
\begin{align}
\nabla \Phi(\ztt) &= \nabla \Phi(\zt) + \eta \, \nabla_{z} J(\zt) \label{eq:md-update} \\
\ztt & = \argmax_{\z \in \R^{SA}} \left[\inner{\z - \zt}{\nabla_\z J(\z_t)} - \frac{1}{\eta} \, D_\Phi(\z, \zt) \right] \nonumber 
\end{align}
where $\zt$ is the logit at iteration $t$, $\eta$ is the step-size and $D_\Phi(z,z') := \Phi(\z) - \Phi(\z') - \inner{\nabla \Phi(z')}{z - z'}$ is the Bregman divergence between logits $\z$ and $\z'$ induced by the mirror map $\Phi$. Hence, the MA update at iteration $t$ can be interpreted as moving in the gradient direction $\nabla_{z} J(\zt)$ while staying ``close'' to the logit $\zt$, where the proximity between logits is measured according to the Bregman divergence and weighted by $\frac{1}{\eta}$. 

\vspace{-2ex}
\subsection{Bandit Setting}
\label{sec:tab-mab}
\vspace{-2ex}
It is instructive to first instantiate the $\Alg$ update for the bandit setting where $J(\pi) = \langle \pi, r \rangle$. In this setting, $\nabla_{z} J(\z) \in \R^{A}$ s.t. $[\nabla_{z} J(\z)](a) = \pi(a) \, [r(a) - \inner{\pi}{r}]$. Following~\citet{vaswani2021general}, we use the log-sum-exp mirror map i.e. $\phi(z) =  \ln(\sum_{a} \exp(z(a))$. Since $[\nabla \phi(z)](a) = \frac{\exp(z(a))}{\sum_{a'} \exp(z(a'))} = [h(z)](a) = \pi(a)$, the logit and probability spaces are dual to each other, and the $\nabla \phi$ map can be used to move between these spaces. Given this, the $\Alg$ update can be written as:
\begin{align}
\ppitt(a) &= \ppit(a) \, (1 + \eta \, [r(a) - \inner{\pi}{r}] ) \nonumber \\
& = \ppit(a) \; [1 + \eta \sum_{a' \neq a} \ppit(a') \, \Delta(a, a')] \,,
\label{eq:tab-mab-spma}
\end{align}
where $\Delta(a, a') := r(a) - r(a')$ represents the reward gap between arms $a$ and $a'$. We first ensure that $\ppitt$ is a valid probability distribution. Since $r(a) \in [0,1]$ for all $a$, $\eta \le 1$ is sufficient to guarantee that $\ppitt(a)$ is non-negative for every $a$. Moreover, $\sum_{a} \ppitt(a) = \sum_{a} \ppit(a) + \eta \, \sum_{a} \ppit(a) [r(a) - \inner{\pi}{r}] = \sum_{a} \ppit(a) = 1$. Hence, for $\eta \leq 1$,~\cref{eq:tab-mab-spma} results in a valid update to the policy. The above update is related to the \texttt{PROD} algorithm~\citep{cesa2007improved} used for the experts problem in the online learning literature. In contrast to $\Alg$ which is derived from mirror ascent, \texttt{PROD} is derived using a linearization of the Hedge~\citep{freund1997decision} algorithm and requires explicit normalization to obtain probabilities. 
\vspace{-2ex}
\subsection{MDP Setting}
\label{sec:tab-mdp}
\vspace{-2ex}
In order to extend $\Alg$ to the MDP setting, we use a (state-wise) weighted log-sum-exp mirror map, i.e. for a logit $z \in \R^{SA}$, we define $\Phi(z):= \sum_{s} w(s) \, \phi(z(s,\cdot)) = \sum_{s} w(s) \ln(\sum_{a}\exp(z(s, a))$ where $w(s)$ are the per-state weights. Following the proof of~\citet[Lemma 11]{vaswani2024decision}, the resulting Bregman divergence is given as: $D_\Phi(z,z') = \sum_{s} w(s) \, \text{KL}(\pi'(\cdot|s) || \pi(\cdot|s))$ where $\pi$ and $\pi'$ are the policies corresponding to logits $z$ and $z'$. At iteration $t$ of $\Alg$, we choose $w(s) = d^{\ppit}(s)$ and use the policy gradient theorem~\citep{sutton1999policy} to calculate $[\nabla J(\zt)](s,a) = d^{\ppit}(s) \, \ppit(a|s) \, \apit(s,a)$. The resulting $\Alg$ update is given as:
\begin{align*}
\ztt = \argmax_{\z \in \R^{SA}} \sum_{s} d^{\ppit}(s) \bigg[& \langle \ppit(\cdot|s) \, \apit(s,\cdot), \, z(s,\cdot) \rangle \\ & - \frac{1}{\eta} \, \text{KL}(\ppit(\cdot|s) \, || \, h(z(s,\cdot)) \bigg] \,.
\end{align*}
Since the above maximization decomposes over the states, we can write the per-state update for each $s \in \cS$ in terms of $\ppitt(\cdot|s) = h(\ztt(s,\cdot))$ as follows:
\begin{align}
\ppitt(a | s) &= \ppit(a | s) \, (1 + \eta \apit(s, a)) \,.
\label{eq:tab-mdp-spma}
\end{align}
Similar to the bandit case, since $r(s, a) \in [0,1]$, $\ppitt(a|s)$  is non-negative for $\eta \le 1-\gamma$. Since $\sum_{a} \ppit(a | s) \apit(s, a) = 0$, $\sum \ppitt(a|s) = 1$, and hence~\cref{eq:tab-mdp-spma} results in a valid policy update. 

In order to compare the $\Alg$ update to existing methods, note that for the tabular parameterization, natural policy gradient (\texttt{NPG}) update~\citep{kakade2001natural} is the same as policy mirror ascent~\citep{lan2023policy,johnson2023optimal,xiao2022convergence} and is given by: $\ppitt(a|s) \propto \ppit(a|s) \, \exp(\eta \, \apit(s,a))$. In contrast to $\NPG$, the $\Alg$ update in~\cref{eq:tab-mdp-spma} is linear in both $\eta$ and $\apit(s,a)$ and does not require an explicit normalization across actions to ensure valid probability distributions. On the other hand, softmax policy gradient (\texttt{SPG})~\citep{agarwal2021theory,mei2020global} corresponds to choosing the mirror map $\phi$ in~\cref{eq:md-update} to be the Euclidean norm and has the following update: $\ztt(s,a) = \zt(s,a) + \eta \, \ppit(a|s) \apit(s,a)$. Compared to \texttt{SPG} that uses the softmax policy gradient to update the logits, $\Alg$ uses the softmax policy gradient to directly update the probabilities. As we demonstrate in the next section, this desirable property enables $\Alg$ to achieve faster rates than \texttt{SPG}.

\vspace{-2ex}
\subsection{Theoretical Results}
\label{sec:tab-theory}
\vspace{-2ex}
In this section, we prove convergence guarantees for $\Alg$ in the multi-armed bandit and tabular MDP settings. We first establish linear convergence for $\Alg$ for multi-armed bandits for any constant $\eta \le 1$.
\begin{restatable}{theorem}{smdpoconstbandit}
\label{constant-eta-bandit}
The $\Alg$ update in~\cref{eq:tab-mab-spma} with (i) a constant step-size $\eta \le 1$, and (ii) uniform initialization i.e. $\ppi_{0}(a) = \frac{1}{K}$ for all $a$ converges as:
\begin{align*}
r(a^*) - \langle \pi_T, r \rangle \leq \left(1 - \frac{1}{K} \right) \exp\left(\frac{-\eta \, \Delta_{\min} \, T}{K}\right) \,,
\end{align*}
where $T$ is the number of iterations, $a^*$ is the optimal arm i.e. $a^* = \argmax_{a} r(a)$ and $\Delta_{\min} := \min_{a \neq a^*} \Delta(a^*, a) = r(a^*) - r(a)$ is the gap. 
\label{thm:tab-spma-bandit-linear}
\end{restatable}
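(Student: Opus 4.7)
The plan is to track $\pi_t(a^*)$, show it is non-decreasing (and hence bounded below by its initial value $1/K$), and then derive a contraction on the suboptimality $1 - \pi_t(a^*)$ that yields the exponential decay. The final step relates $r(a^*) - \langle \pi_T, r\rangle$ to $1 - \pi_T(a^*)$ via the fact that rewards lie in $[0,1]$.

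First I would specialize the update in~\cref{eq:tab-mab-spma} to $a = a^*$, giving
\begin{align*}
\pi_{t+1}(a^*) = \pi_t(a^*)\bigl(1 + \eta\,[r(a^*) - \langle \pi_t, r\rangle]\bigr).
\end{align*}
Since $r(a^*) \geq \langle \pi_t, r\rangle$, this immediately shows $\pi_t(a^*)$ is non-decreasing in $t$, and thus $\pi_t(a^*) \geq \pi_0(a^*) = 1/K$ under the uniform initialization. Next, I would rewrite the expected-reward gap as $r(a^*) - \langle \pi_t, r\rangle = \sum_{a \neq a^*} \pi_t(a)\,\Delta(a^*,a) \geq \Delta_{\min}\,(1 - \pi_t(a^*))$, so that
\begin{align*}
\pi_{t+1}(a^*) \geq \pi_t(a^*)\bigl(1 + \eta\,\Delta_{\min}\,(1 - \pi_t(a^*))\bigr).
\end{align*}
Setting $x_t := 1 - \pi_t(a^*)$ and rearranging yields the clean recursion
\begin{align*}
x_{t+1} \leq x_t\bigl(1 - \eta\,\Delta_{\min}\,(1 - x_t)\bigr) = x_t\bigl(1 - \eta\,\Delta_{\min}\,\pi_t(a^*)\bigr).
\end{align*}

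Using the monotonicity lower bound $\pi_t(a^*) \geq 1/K$ established above, I would then weaken the per-step contraction to a state-independent factor:
\begin{align*}
x_{t+1} \leq x_t\left(1 - \frac{\eta\,\Delta_{\min}}{K}\right).
\end{align*}
Iterating over $T$ steps and applying $1 - y \leq e^{-y}$ gives $x_T \leq x_0\,\exp(-\eta\,\Delta_{\min} T/K) = (1 - 1/K)\,\exp(-\eta\,\Delta_{\min} T/K)$.

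Finally, I would bound the suboptimality as $r(a^*) - \langle \pi_T, r\rangle = \sum_{a \neq a^*} \pi_T(a)\,[r(a^*)-r(a)] \leq \sum_{a \neq a^*} \pi_T(a) = x_T$, since reward differences are at most $1$, which delivers the stated bound. The main subtlety (not really an obstacle) is ensuring the recursion step is correctly derived by keeping track of signs when passing from $\pi_{t+1}(a^*) \geq \cdots$ to $x_{t+1} \leq \cdots$; everything else is elementary. The crucial structural insight that makes the argument go through is that the multiplicative $\Alg$ update preserves the ordering $\pi_{t+1}(a^*) \geq \pi_t(a^*)$, which in turn lets us replace the data-dependent factor $\pi_t(a^*)$ in the contraction by the worst-case constant $1/K$ instead of having to bootstrap a more delicate logistic-type analysis.
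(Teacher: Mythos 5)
Your proposal is correct and follows essentially the same argument as the paper: establish monotonicity of $\pi_t(a^*)$ to get the lower bound $\pi_t(a^*) \geq 1/K$, combine it with the gap bound $r(a^*) - \langle \pi_t, r\rangle \geq \Delta_{\min}(1-\pi_t(a^*))$ to obtain the per-step contraction $x_{t+1} \leq x_t(1 - \eta\Delta_{\min}/K)$, and finish with $1-y \leq e^{-y}$ and the bound $r(a^*)-\langle\pi_T,r\rangle \leq 1-\pi_T(a^*)$. The only difference is the order in which the two lower bounds are applied inside the recursion, which is immaterial.
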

The above theorem (proved in~\cref{app:multi-armed-bandit-proofs}) shows that for multi-armed bandit problems, $\Alg$ can achieve linear convergence to the optimal arm, and the resulting rate depends on both the gap and the number of arms. In~\cref{app:tab-mab-spma-super}, we prove that $\Alg$ with specific gap-dependent step-sizes can achieve a global super-linear convergence rate for multi-armed bandits. To the best of our knowledge, these are the first global super-linear rates for PG methods on multi-armed bandit problems.   

In the next theorem, we extend the linear convergence result to tabular MDPs and prove that when given access to the exact policy gradients, $\Alg$ results in linear convergence to the optimal value function for any sufficiently small constant step-size. 
\begin{restatable}{theorem}{smdpoconstmdp}
Using the $\Alg$ update in~\cref{eq:tab-mdp-spma} with a step-size $\eta < \min\left\{1 - \gamma, \frac{1}{C_t (1-\gamma)} \right\}$ converges as:
\begin{align*}
& \norminf{V^{\pi^{*}} - V^{\pi_{T}}} \le \left(\prod_{t=0}^{T-1} \alpha_t \right)  \norminf{V^{\pi^{*}} - V^{\pi_{0}}} \,,
\end{align*}
where $\alpha_t := (1 - \eta \, C_t \, (1- \gamma) )$, $C_t := \min_s \{ \ppit(\tilde{a}_t(s) | s)  \, \Delta^t(s) \}$, $\tilde{a}_t(s) := \argmax_a \qpit(s, a)$ and $\Delta^t(s) := \max_a \qpit(s, a) - \max_{a\ne \tilde{a}} \qpit(s, a)$.
\label{thm:smdpo-constant-mdp}
\end{restatable}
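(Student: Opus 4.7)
The plan is to establish the one-step contraction $\norminf{V^{\pi^*} - V^{\pi_{t+1}}} \le \alpha_t \norminf{V^{\pi^*} - V^{\pi_t}}$ and then iterate. First I would compute the Bellman backup $T^{\pi_{t+1}} V^{\pi_t}$ directly: since $\ppitt(a|s) = \ppit(a|s)(1 + \eta \apit(s,a))$ and $\sum_a \ppit(a|s) \apit(s,a) = 0$, a short calculation gives $T^{\pi_{t+1}} V^{\pi_t}(s) = V^{\pi_t}(s) + \eta \sum_a \ppit(a|s) \apit(s,a)^2$. The soft policy improvement inequality (which follows because the RHS dominates $V^{\pi_t}$ pointwise and $T^{\pi_{t+1}}$ is monotone) then gives $V^{\pi_{t+1}}(s) \ge T^{\pi_{t+1}} V^{\pi_t}(s)$, so
\[
V^{\pi^*}(s) - V^{\pi_{t+1}}(s) \le (V^{\pi^*}(s) - V^{\pi_t}(s)) - \eta \sum_a \ppit(a|s) \apit(s,a)^2 \,.
\]

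Next I would show that the second-moment term is at least $C_t \cdot u(s)$, where $u(s) := \apit(s, \tilde{a}_t(s)) \ge 0$. Splitting the sum over $\tilde{a}_t(s)$ and its complement and combining Cauchy--Schwarz with $\sum_a \ppit(a|s) \apit(s,a) = 0$ gives $\sum_a \ppit(a|s) \apit(s,a)^2 \ge \frac{\ppit(\tilde{a}_t(s)|s)\, u(s)^2}{1 - \ppit(\tilde{a}_t(s)|s)}$. Writing $V^{\pi_t}(s)$ as an expectation of $\qpit(s,\cdot)$ under $\ppit(\cdot|s)$ and upper bounding the non-$\tilde{a}_t$ contributions by the second-best $\qpit$-value yields the gap inequality $u(s) \ge (1 - \ppit(\tilde{a}_t(s)|s)) \Delta^t(s)$. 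Multiplying these two bounds, the $1-\ppit(\tilde{a}_t(s)|s)$ denominator cancels, and using $\ppit(\tilde{a}_t(s)|s) \Delta^t(s) \ge C_t$ by the definition of $C_t$, I get $\sum_a \ppit(a|s) \apit(s,a)^2 \ge C_t\, u(s)$ at every state $s$.

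Third, I would link $u(s)$ to the suboptimality via a Bellman inequality: $V^{\pi^*}(s) \le \max_a \qpit(s,a) + \gamma \norminf{V^{\pi^*} - V^{\pi_t}}$, obtained by applying the Bellman optimality operator to the componentwise bound $V^{\pi^*} \le V^{\pi_t} + \norminf{V^{\pi^*} - V^{\pi_t}}\,\mathbf{1}$. This gives $u(s) \ge (V^{\pi^*}(s) - V^{\pi_t}(s)) - \gamma \norminf{V^{\pi^*} - V^{\pi_t}}$. Letting $g := V^{\pi^*} - V^{\pi_t}$ and choosing $s_{t+1} \in \argmax_s [V^{\pi^*}(s) - V^{\pi_{t+1}}(s)]$, chaining the pieces at $s_{t+1}$:
\begin{align*}
\norminf{V^{\pi^*} - V^{\pi_{t+1}}} &\le g(s_{t+1}) - \eta C_t u(s_{t+1}) \\ &\le (1-\eta C_t)\, g(s_{t+1}) + \eta C_t \gamma \norminf{g} \,.
\end{align*}
The step-size bounds $\eta < 1-\gamma$ together with $\Delta^t(s) \le 1/(1-\gamma)$ imply $\eta C_t \le 1$, so the right hand side, viewed as an affine function of $g(s_{t+1}) \in [0, \norminf{g}]$, is maximized at $g(s_{t+1}) = \norminf{g}$ and equals $(1 - \eta C_t(1-\gamma)) \norminf{g} = \alpha_t \norminf{g}$. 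Unrolling over $T$ steps gives the claim.

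The main obstacle is step two: the naive bound $\sum_a \ppit \apit^2 \ge \ppit(\tilde{a}_t(s)|s)\, u(s)^2$ would yield a rate in $u^2$ rather than $C_t u$ and cannot produce the clean probability-times-gap quantity that defines $C_t$. The $\ppit(\tilde{a}_t(s)|s)/(1-\ppit(\tilde{a}_t(s)|s))$ factor from Cauchy--Schwarz cancels exactly with the $(1-\ppit(\tilde{a}_t(s)|s))$ in the gap lower bound for $u$, which is what surfaces $\ppit(\tilde{a}_t(s)|s) \Delta^t(s)$ as the correct curvature-like quantity at every state and ultimately produces the contraction factor $\alpha_t$.
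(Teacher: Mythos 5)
Your proposal is correct and follows essentially the same route as the paper's: the same one-step improvement $V^{\pi_{t+1}}(s) \ge T^{\pi_{t+1}}V^{\pi_t}(s) = V^{\pi_t}(s) + \eta\sum_a \ppit(a|s)[\apit(s,a)]^2$, the same key curvature lemma $\sum_a \ppit(a|s)[\apit(s,a)]^2 \ge C_t \max_a \apit(s,a)$ (which the paper proves via Jensen's inequality on the conditional distribution over suboptimal actions — equivalent in substance to your Cauchy--Schwarz step — whereas you route the gap in through the auxiliary inequality $\max_a \apit(s,a) \ge (1-\ppit(\tilde{a}_t(s)|s))\Delta^t(s)$ and cancel the $1-\ppit(\tilde{a}_t(s)|s)$ factor, arriving at the same bound $\ppit(\tilde{a}_t(s)|s)\Delta^t(s)\max_a\apit(s,a)$), and the same use of the Bellman optimality operator to convert $\max_a \apit(s,a)$ into a contraction of the suboptimality. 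The one substantive difference is in your favour: by evaluating at the state attaining $\norminf{V^{\pi^*}-V^{\pi_{t+1}}}$ and using $V^{\pi^*} \le TV^{\pi_t} + \gamma\norminf{V^{\pi^*}-V^{\pi_t}}\mathbf{1}$, you apply the $\gamma$-contraction correctly in the sup norm, whereas the paper's intermediate pointwise claim $TV^{\pi^*}(s)-TV^{\pi_t}(s)\le\gamma[V^{\pi^*}(s)-V^{\pi_t}(s)]$ does not hold state-by-state in general (the final $\ell_\infty$ statement is unaffected after taking the supremum).
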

For ease of exposition, the above theorem considers $\tilde{a}_t(s)$ to be the unique action maximizing $\qpit(s, \cdot)$ for every state $s$ and policy $\ppit$. In~\cref{app:mdp-proof}, we extend this to include multiple optimal actions with a minor change in the definition of the gap. For rewards in $(0,1)$, $C_t \, (1-\gamma)$ is in $(0,1)$ and depends on the initialization. If $C := \min_{t \in [T]} C_t$, then the above implies that when $T \in O\left(\frac{1}{\eta \, C (1 - \gamma)} \, \ln(\nicefrac{1}{\epsilon})\right)$, $\Alg$ guarantees that $V^{\pi_T}(s) \geq V^{*}(s) - \epsilon$ for all $s \in \cS$.   
Note that in order to establish linear convergence, it is crucial to ensure that $C$ is nonzero. Proving this property theoretically is challenging, as reflected in related work. In particular, the best-known linear convergence rates for $\NPG$ with a constant step size also depend on this same constant (see, for example, Theorem~5.4 in~\citep{liu2024elementary} and Lemma~10 in~\citep{mei2021understanding}). Consequently, in~\cref{app:tabular-mdp-exp-ct}, we empirically verify that $C$ is lower-bounded by a positive constant. 

In order to put the above convergence result in context, note that \texttt{SPG} with a constant step-size results in a $\Theta\left(\nicefrac{1}{\epsilon}\right)$ convergence~\citep{mei2020global}. Recently,~\citet{chenaccelerated} proved that constant step-size \texttt{SPG} with Nesterov acceleration can obtain an $O(1/\sqrt{\epsilon})$ convergence. In contrast, the above theorem demonstrates that by choosing the appropriate mirror map, constant step-size $\Alg$ can achieve a faster $O(\log(1/\epsilon))$ rate of convergence. On the other hand,~\citet{liu2024elementary,lu2024towards} prove that \texttt{SPG} with adaptive step-sizes can also result in linear convergence. However, the resulting rate depends on the distribution mismatch ratio $\left\|\frac{d^{\pi^*}}{\rho}\right\|_{\infty}$ that can be exponentially large in the size of the state space~\citep{li2021softmax}. In contrast, the convergence result in~\cref{thm:smdpo-constant-mdp} has no such dependence. The linear convergence rate in~\cref{thm:smdpo-constant-mdp} matches that of \texttt{NPG} with a constant step-size~\citep{liu2024elementary} and compared to~\citet[Theorem 5.4]{liu2024elementary}, it results in a better dependence (exponential vs polynomial) on the gap $\Delta^t(s)$. Finally, we note that for the tabular parameterization, a variant of $\TRPO$ has been shown to achieve $O(1/\epsilon^2)$ convergence to the optimal policy~\citep{shani2020adaptive}.

In the next section, we extend $\Alg$ to exploit function approximation to handle large state-action spaces.

\vspace{-2ex}
\section{HANDLING FUNCTION APPROXIMATION}
\label{sec:fa}
\vspace{-2ex}
Handling large MDPs requires function approximation (FA) techniques to share information between states and actions. For example, given a set of state-action features $\X \in \R^{SA \times d}$ where $d << SA$, the \textit{log-linear policy parameterization}~\citep{agarwal2021theory,alfano2022linear,yuan2023linear} considers policies of the form: $\pi(a|s) = \frac{\exp(\langle \X(s,a), \theta \rangle)}{\sum_{a'} \exp(\langle \X(s,a'), \theta \rangle)}$ where $\theta \in \R^{d}$ is the parameter to be learned. Hence, the log-linear policy parameterization can handle large state-action spaces while learning a compressed $d$-dimensional representation. 
\begin{algorithm}[!h]
\caption{$\Alg$ with function approximation}
\label{alg:spma}
\textbf{Input}: $\theta_0$ (parameters for the initial policy $\pi_0$), $f_{\theta}$ (function approximation), $T$ (number of outer-loop), $m$ (number of inner-loops), $\eta$ (outer-loop step-size), $\zeta$ (inner-loop step-size) \\
\For{$t \leftarrow 0$  \KwTo  $T-1$}{

    \texttt{1.} Interact with the environment using $\ppit$ and form the surrogate function $\ell_t(\theta)$ in~\cref{eq:approx-surrogate} 

    \texttt{2.} Initialize inner-loop: $\omega_0 = \theta_{t}$ \\
    
    \For{$k \leftarrow 0$   \KwTo  $m-1$}{
        $\omega_{k+1} = \omega_{k} - \zeta \, \nabla_{\omega} \ell_t(\omega_k)$ 
    }
    \texttt{3.} $\theta_{t+1} = \omega_{m}$ \\
    
    \texttt{4.} Update $\ppitt(\cdot|s) = h(f_{\theta_{t+1}}(s,.))$
}
Return $\theta_{T}$
\end{algorithm}
We interpret the log-linear policy parameterization as a constraint in the space of logits. Specifically, the logits $\z$ are constrained to lie in the set $\cZ = \{z \in \R^{SA} \vert \exists \theta \text{ s.t. } z = \X \theta$ \}, meaning that the logits are required to be realizable by the linear model with features $\X$. We define $\Pi$ as the corresponding set of feasible policies, i.e. $\Pi = \{\pi \vert \forall s \in \cS \,, \pi(\cdot|s) = h(z(s,\cdot)) \text{ s.t. } \z \in \cZ \}$. Hence, the policies in $\Pi$ are constrained to be log-linear. Note that, as in the case of log-linear policies, $\Pi$ can be a non-convex set, even when $\cZ$ is convex. For general \textit{energy-based or neural policies}~\citep{haarnoja2017reinforcement,agarwal2021theory}, $\pi(a|s) \propto \exp(f_{\theta}(s,a))$ where $f_{\theta}: \R^{SA} \rightarrow \R$ is a complex, non-linear model. In this case, the logits are constrained to lie in the set: $\cZ = \{z \in \R^{SA} \vert \exists \theta \text{ s.t. } z(s,a) = f_{\theta}(s,a) \}$. 

The above interpretation allows us to extend $\Alg$ to the FA setting. Specifically, we use the same mirror ascent update as in~\cref{eq:md-update} with an additional projection step onto the feasible set $\cZ$. Specifically, we define $\zth$ s.t. $\nabla \Phi(\zth) = \nabla \Phi(\zt) + \eta \, \nabla_{z} J(\zt)$ and compute $\ztt = \argmin_{z \in \cZ} D_\Phi(\z, \zth)$. This step denotes the Bregman projection of $\zth$ onto $\cZ$, i.e. we seek to find the closest (according to the Bregman divergence) realizable point (in $\cZ$) to the ``ideal'' point $\zth$ which corresponds to using the tabular parameterization. Following~\citet{vaswani2021general,lavington2023target}, we convert the above projection problem into an unconstrained minimization problem where $\forall (s,a)$, $\ztt(s,a) = f_{\thtt}(s,a)$, $z_{\theta}(s,a) := f_{\theta}(s,a) \in \cZ$, $\thtt = \argmin_{\theta} D_\Phi(z_{\theta}, \zth)$ i.e. we aim to find the parameter $\theta$ that realizes the point $\z_\theta \in \cZ$  which is closest to $\zth$. Following~\cref{sec:tab-mdp}, using the log-sum-exp mirror map weighted by $d^{\ppit}(s)$ at iteration $t$ results in the following optimization problem $\thtt = \argmin_{\theta} \tilde{\ell}_t(\theta)$ where,  
\begin{align}
& \tilde{\ell}_t(\theta) := \sum_{s} d^{\ppit}(s) \, \text{KL}(\ppith(\cdot|s) \, || \, \pi_{\theta}(\cdot|s)) \label{eq:spma-surrogate-main} \\
&= \underset{s \sim d^{\ppit}}{\E}  \, \cH \left(h(f_{\tht}(s,\cdot))  (1 + \eta  A^{\ppit}(s,\cdot)), h(f_{\theta}(s,\cdot))  \right) + C_t \nonumber \,.
\end{align}
Here, $\cH(p,q) := -\E_{p}[\ln(q)] = - \sum_{a} p(a) \ln(q(a))$ is the cross-entropy between distributions $p$ and $q$ and $C_t$ is a constant with respect to $\theta$. We refer to $\tilde{\ell}_t(\theta)$ as the \textit{ideal surrogate}. Minimizing this surrogate requires calculating the expectation over the states sampled according to $\ppit$. In order to have a practical algorithm, we can run trajectories $\tau$ starting from the initial state distribution $\rho$, following the policy $\ppit$ and thus sampling from the $d^{\ppit}$ distribution (see~\citet[Algorithm 3]{agarwal2021theory} for the detailed procedure). Given these samples, we form the surrogate $\ell_t(\theta)$ defined as:  
\begin{align}
\sum_{s \sim \tau} \text{KL}\left(h(f_{\tht}(s,\cdot))  (1 + \eta  A^{\ppit}(s,\cdot))\, || \, h(f_{\theta}(s,\cdot))  \right) \,.
\label{eq:approx-surrogate}    
\end{align}
Note that $\E[\ell_t(\theta)] = \tilde{\ell}_t(\theta)$ where the expectation is w.r.t. to $d^{\ppit}$. We use $m$ steps of (stochastic) gradient descent to approximately minimize $\ell_t(\theta)$. Putting everything together, the algorithm incorporating general FA is presented in~\cref{alg:spma}. 

\textbf{Log-linear Policy Parameterization:} For this special case, the problem in~\cref{eq:spma-surrogate-main} is equivalent to a weighted (according to $d^{\ppit}(s)$) multi-class classification for each state. The per-state problem corresponds to a softmax classification into $A$ classes using a linear model with features $\X$ and soft labels equal to $\ppith(\cdot|s)$. Computing $\thtt$ thus involves minimizing a smooth, convex function. 

In the next section, we compare~\cref{alg:spma} to existing approaches that incorporate FA. 

\vspace{-2ex}
\subsection{Comparison to Existing Approaches}
\label{sec:fa-comparison}
\vspace{-1ex}
\textbf{Comparison to $\NPG$}: A principled extension of $\NPG$ to handle FA is via the compatible function approximation~\citep{kakade2001natural,agarwal2021theory}. An example of such an algorithm, \texttt{Q-NPG} involves solving a quadratic surrogate at each iteration $t$: $\hat{\omega}_{t} = \min_{\omega} \sum_{s} d^{\ppit}(s) \sum_{a} \ppit(a|s) \left(f_{\omega}(s,a) - Q^{\ppit}(s,a) \right)^2$. The policy parameters are updated using $\hat{\omega}_t$ which corresponds to the natural gradient direction. While this approach results in theoretical guarantees (see~\cref{sec:fa-theory} for details); for a general parameterization, the resulting algorithm involves changing the representation of the critic at every iteration. Consequently, solving the surrogate is expensive, limiting the practicality of the method.

\textbf{Comparison to $\MDPO$}: A more practical extension of $\NPG$ is mirror descent policy optimization~\citep{tomar2020mirror} ($\MDPO$). Similar to $\Alg$, $\MDPO$ can be interpreted as projected (onto the feasible set of policies) mirror ascent in the space of probabilities~\citep{vaswani2021general}. The resulting surrogate (as a function of the policy parameters) is given by: $\sum_{s} d^{\ppit}(s) \, \text{KL}\left(\pi_{\theta}(\cdot|s) \, || \, h\left(f_{\tht}(s,\cdot) \, \exp(\eta \, Q^{\ppit}(s,\cdot) ) \right)  \right)$. Unlike the surrogate in~\cref{eq:spma-surrogate-main}, the  $\MDPO$ surrogate is non-convex even when using a tabular softmax parameterization for the policy, and consequently does not have any theoretical guarantees. However, $\MDPO$ results in good empirical performance, and we compare to it in~\cref{sec:experiments}. 

\textbf{Comparison to \texttt{TRPO}}: As explained in~\citet{vaswani2021general}, the surrogate in~\cref{eq:spma-surrogate-main} is closely related to $\texttt{TRPO}$. In particular, the $\texttt{TRPO}$ update consists of solving the following optimization problem: $\sum_{s} d^{\ppit}(s) \sum_{a} \ppit(a|s)A^{\ppit}(s, a) \, \frac{\pi_{\theta}(a|s)}{\pi_{\tht}(a|s)}$, such that $\E_{s \sim d^{\ppit}} \left[\text{KL}(\ppit(\cdot|s) \, || \, \pi_{\theta}(\cdot|s)) \right] \leq \delta$. $\Alg$ (i) uses instead $\sum_{s} d^{\ppit}(s) \sum_{a} \ppit(a|s) A^{\ppit}(s, a) \, \log \frac{\pi_{\theta}(a|s)}{\pi_{\tht}(a|s)}$, i.e. the logarithm of the importance sampling ratio, making the resulting update more stable~\citep{vaswani2021general} and (ii) enforces the proximity between policies via a regularization rather than a constraint. Enforcing the trust-region constraint in $\texttt{TRPO}$ requires additional hyper-parameters, code-level optimizations and computation~\citep{engstrom2019implementation}. In contrast, $\Alg$ is more computationally efficient and simpler to implement in practice. In the next section, we study the theoretical properties of~\cref{alg:spma}. 
\vspace{-2ex}
\subsection{Theoretical Guarantee}
\label{sec:fa-theory}
\vspace{-2ex}
For rewards in $[0,1]$ and for a general policy parameterization,~\citet{vaswani2021general} prove that, for $\eta \leq 1 - \gamma$,~\cref{alg:spma} results in monotonic improvement, i.e. $J(\ppitt) \geq J(\ppit)$ and hence converges to a stationary point at an $O(1/\epsilon)$ rate. Since $J$ is non-convex and can have multiple stationary points, the result in~\citet{vaswani2021general} does not provide sufficient evidence for the good empirical performance of~\cref{alg:spma}. In this section, we prove that, under reasonable assumptions similar to existing works,~\cref{alg:spma} can converge to the neighbourhood of the optimal value function at a linear rate. The size of the neighbourhood is determined by various practical factors such as sampling, inexact optimization, and bias due to the FA. In order to state our result, we first state and justify our assumptions. 

Recall that in order to have a practical algorithm, we minimize $\ell_t(\theta)$ obtained by sampling from $d^{\ppit}$.   
\begin{assumption}
\textit{Excess Risk}: For all iterations $t$ of~\cref{alg:spma}, $\vert \tilde{\ell}_{t}(\theta_{t+1}) - \min \tilde{\ell}_{t}(\theta) \vert \leq \epsilon_{\text{stat}}$.  
\label{as:samplingerror}     
\end{assumption}
\vspace{-1ex}
The above assumption quantifies the excess risk incurred by minimizing a finite sampled ``dataset'' of states as compared to minimizing over the population loss $\tilde{\ell}_t(\theta)$. This is a standard assumption in the literature analyzing the convergence of policy gradient methods with FA~\citep{agarwal2021theory,alfano2022linear,yuan2023linear}. If $n$ is the number of samples and the surrogate is minimized using (stochastic) gradient descent, using the standard generalization results~\citep{lei2021sharper,nikolakakis2022beyond}, we expect $\epsilon_{\text{stat}} = O(1/n)$ for the log-linear parameterization. For example, using $m$ iterations of SGD would result in $\epsilon_{\text{stat}} = O(1/n + 1/m)$~\citep[Theorem 6]{lei2021sharper}. For a general parameterization, where the surrogate might be non-convex, the excess risk can be bounded up to the optimization error~\citep{nikolakakis2022beyond}. Under the appropriate technical assumptions, $\ell_t(\theta)$ can been shown to satisfy the Polyak-Lojasiewicz condition ~\citep{liu2022loss} implying that the optimization error for (stochastic) gradient descent can be made arbitrarily small. The next assumption quantifies the bias incurred because of a policy parameterization with limited expressive power compared to using the tabular parameterization. 
\vspace{-2ex}
\begin{assumption}
\textit{Bias}: For all iterations $t$ of~\cref{alg:spma}, $\min_{\theta} \tilde{\ell}_t(\theta) \leq \epsilon_{\text{bias}}$.
\label{as:bias}
\end{assumption}
\vspace{-1ex}
The above assumption captures the flexibility of the model class being used in the policy parameterization. For a tabular parameterization where the number of parameters scales as $SA$, $\epsilon_{\text{bias}} = 0$, whereas for the log-linear parameterization, $\epsilon_{\text{bias}}$ depends on the expressivity of the features. The final assumption is concerned with exploration and indicates that the initial state distribution has full support implying that the method does not require explicit exploration. 
\begin{assumption}
\textit{Exploration}: $\forall s \in \cS$, $\rho(s) \geq \rho_{\min} > 0$. 
\label{as:exploration}
\end{assumption}
\vspace{-4ex}
The above assumption is standard in the literature~\citep{agarwal2021theory,xiao2022convergence} and helps isolate and study the optimization properties of PG methods. We prove the following theorem in~\cref{app:mdp-proof}. 
\begin{restatable}{theorem}{smdpofuncapp}
Under~\cref{as:samplingerror}-\ref{as:exploration}, $\Alg$ with $\eta < \min\left\{1 - \gamma, \frac{1}{C_t (1-\gamma)} \right\}$ converges as,
\vspace{-2ex}
\begin{align*}
& J(\pi^*) - J(\pi_T) \\ & \leq \left(\prod_{t=0}^{T-1} \alpha_t\right) (J(\pi^*) - J(\pi_0)) + \beta \sum_{t=0}^{T-1}\prod_{i=t+1}^{T-1}\alpha_i \,,
\end{align*}
where $\beta = \frac{\sqrt{2}}{(1-\gamma)^2\rho_{\min}} \sqrt{\epsilon_{\text{stat}}  + \epsilon_{\text{bias}}}$ and $\alpha_t$ has the same definition as in~\cref{thm:smdpo-constant-mdp}.
\label{thm:spma-fa}
\end{restatable}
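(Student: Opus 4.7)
The plan is to adapt the tabular linear-convergence argument behind \cref{thm:smdpo-constant-mdp} by isolating the extra error introduced by the inexact minimization of the surrogate $\ell_t$. I would first introduce the ``virtual'' tabular iterate $\pi_{t+1/2}$ defined by $\pi_{t+1/2}(a|s) = \pi_t(a|s)(1 + \eta A^{\pi_t}(s,a))$, which is the policy that one step of tabular SPMA (\cref{eq:tab-mdp-spma}) would produce from $\pi_t$. By the derivation of \cref{eq:spma-surrogate-main}, the population surrogate can be written as $\tilde{\ell}_t(\theta) = \mathbb{E}_{s\sim d^{\pi_t}}\!\left[\text{KL}(\pi_{t+1/2}(\cdot|s) \,\|\, \pi_\theta(\cdot|s))\right]$, so combining \cref{as:samplingerror} and \cref{as:bias} yields
\begin{align*}
\mathbb{E}_{s \sim d^{\pi_t}}\!\left[\text{KL}(\pi_{t+1/2}(\cdot|s)\,\|\,\pi_{t+1}(\cdot|s))\right] \;=\; \tilde{\ell}_t(\theta_{t+1}) \;\leq\; \epsilon_{\text{bias}} + \epsilon_{\text{stat}}.
\end{align*}

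Next I would decompose the sub-optimality gap as
\begin{align*}
J(\pi^*) - J(\pi_{t+1}) \;=\; \underbrace{\left[J(\pi^*) - J(\pi_{t+1/2})\right]}_{(\text{A})} \;+\; \underbrace{\left[J(\pi_{t+1/2}) - J(\pi_{t+1})\right]}_{(\text{B})}.
\end{align*}
For term (A), because $\pi_{t+1/2}$ is the exact tabular SPMA update from $\pi_t$, the per-state contraction that drives the proof of \cref{thm:smdpo-constant-mdp} applies and delivers $V^{\pi^*}(s) - V^{\pi_{t+1/2}}(s) \leq \alpha_t \left(V^{\pi^*}(s) - V^{\pi_t}(s)\right)$ for every $s$; taking expectation under $\rho$ gives $(\text{A}) \leq \alpha_t (J(\pi^*) - J(\pi_t))$.

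For term (B), I would invoke a standard policy-comparison bound of the form $|J(\pi_{t+1/2}) - J(\pi_{t+1})| \leq \tfrac{1}{(1-\gamma)^2}\,\mathbb{E}_{s\sim\nu}\|\pi_{t+1/2}(\cdot|s)-\pi_{t+1}(\cdot|s)\|_1$ for a suitable state-visitation distribution $\nu$. Using \cref{as:exploration} together with $d^{\pi_t}(s) \geq (1-\gamma)\rho_{\min}$ to change measure to $d^{\pi_t}$, followed by Jensen's and Pinsker's inequalities, I would obtain
\begin{align*}
\mathbb{E}_{s\sim d^{\pi_t}}\|\pi_{t+1/2}(\cdot|s) - \pi_{t+1}(\cdot|s)\|_1 \;\leq\; \sqrt{2\,\tilde{\ell}_t(\theta_{t+1})} \;\leq\; \sqrt{2(\epsilon_{\text{stat}}+\epsilon_{\text{bias}})},
\end{align*}
so that $|(\text{B})| \leq \beta$ for $\beta$ of the stated form. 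Combining (A) and (B) gives the one-step recursion $J(\pi^*) - J(\pi_{t+1}) \leq \alpha_t (J(\pi^*) - J(\pi_t)) + \beta$, which telescopes directly into the claimed product-sum expression.

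The main obstacle is obtaining exactly the stated $\beta$: this requires careful bookkeeping of the distribution mismatch between $d^{\pi_t}$ (under which the surrogate error is controlled via Assumptions 1--2) and the state distribution that appears naturally in the policy-comparison bound for (B), without losing additional factors of $1/(1-\gamma)$ or $1/\rho_{\min}$. A secondary subtlety is that the virtual iterate $\pi_{t+1/2}$ need not lie in the constrained log-linear policy class $\Pi$; this is benign because the per-state contraction underlying \cref{thm:smdpo-constant-mdp} relies only on the functional form of the SPMA update and not on realizability of the ideal iterate.
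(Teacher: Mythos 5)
Your proposal is correct and follows essentially the same route as the paper's proof: the same virtual tabular iterate $\pi_{t+1/2}$, the same invocation of the per-state contraction lemma behind \cref{thm:smdpo-constant-mdp} for term (A), and the same chain (H\"older on $Q^{\pi_{t+1}}$, change of measure via $d^{\pi_t}\geq(1-\gamma)\rho$ and $\rho\geq\rho_{\min}$, Pinsker, Jensen, then \cref{as:samplingerror}--\ref{as:bias}) to bound term (B) by the stated $\beta$. Your closing remark that $\pi_{t+1/2}$ need not be realizable in $\Pi$ is also exactly how the paper treats it.
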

\vspace{-2ex}
The above theorem shows that~\cref{alg:spma} converges linearly to the neighbourhood of the optimal value function. Furthermore, for the log-linear parameterization, the size of the neighbourhood can be bounded explicitly. For example, if the logits $z_{t+1/2}$ for every $t$ lie in the span of the features, $\epsilon_{\text{bias}} = 0$ (this is similar to the linear Bellman completeness condition used in the analysis of value-based methods~\citep{munos2005error}) and $\epsilon_{\text{stat}} = O(1/n + 1/m)$. By using more samples and with more (S)GD iterations, the size of the neighbourhood can be made arbitrarily small. Except for the neighbourhood term, the above convergence result is similar to that for the tabular setting in~\cref{thm:smdpo-constant-mdp}. The other difference is that the result in the tabular setting holds in the $\ell_\infty$ norm and thus holds for all states, whereas the result in~\cref{thm:spma-fa} only holds for a fixed starting state distribution $\rho$. In practice, $\apit$ is typically estimated via a critic. To account for this, we generalize the proof of~\cref{thm:spma-fa} in~\cref{app:mdp-proof}, and prove that~\cref{alg:spma} converges linearly to a neighbourhood that depends on an additional term proportional to the critic error.   

We now compare to the existing theoretical results for PG methods with FA. For the log-linear policy parameterization, \texttt{Q-NPG} and its variants have been shown to achieve linear convergence to the neighbourhood of the optimal value function~\citep{agarwal2021theory,alfano2022linear,yuan2023linear}. The size of the neighbourhood depends on similar quantities as ~\cref{thm:spma-fa}. Finally, we note that while an ideal, impractical variant of \texttt{TRPO} has a monotonic improvement guarantee similar to~\cref{alg:spma}~\citep{schulman2015trust}, it does not have convergence guarantees comparable to~\cref{thm:spma-fa}.


\vspace{-2ex}
\section{EMPIRICAL EVALUATION}
\label{sec:experiments}
\vspace{-2ex}
\begin{figure*}[!ht]
\centering
\includegraphics[width=0.91\textwidth]{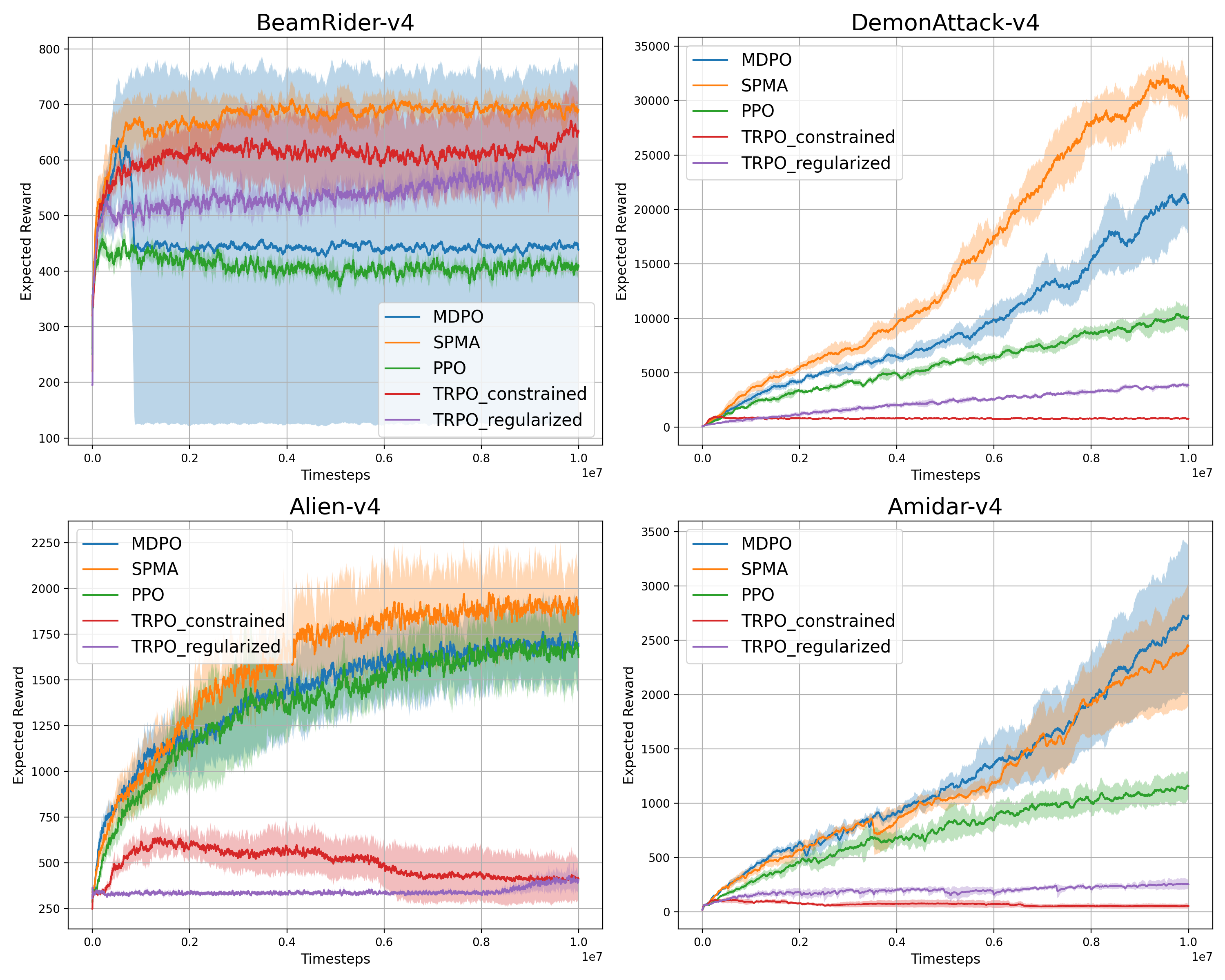}
\caption{On Atari games, where a CNN-based actor network is employed, $\Alg$ matches or surpasses $\MDPO$ and outperforms $\PPO$, as well as both the constrained and regularized versions of $\TRPO$.}
\label{fig:mdp-atari}
\end{figure*}

\begin{figure*}[!ht]
\centering
\includegraphics[width=0.91\textwidth]{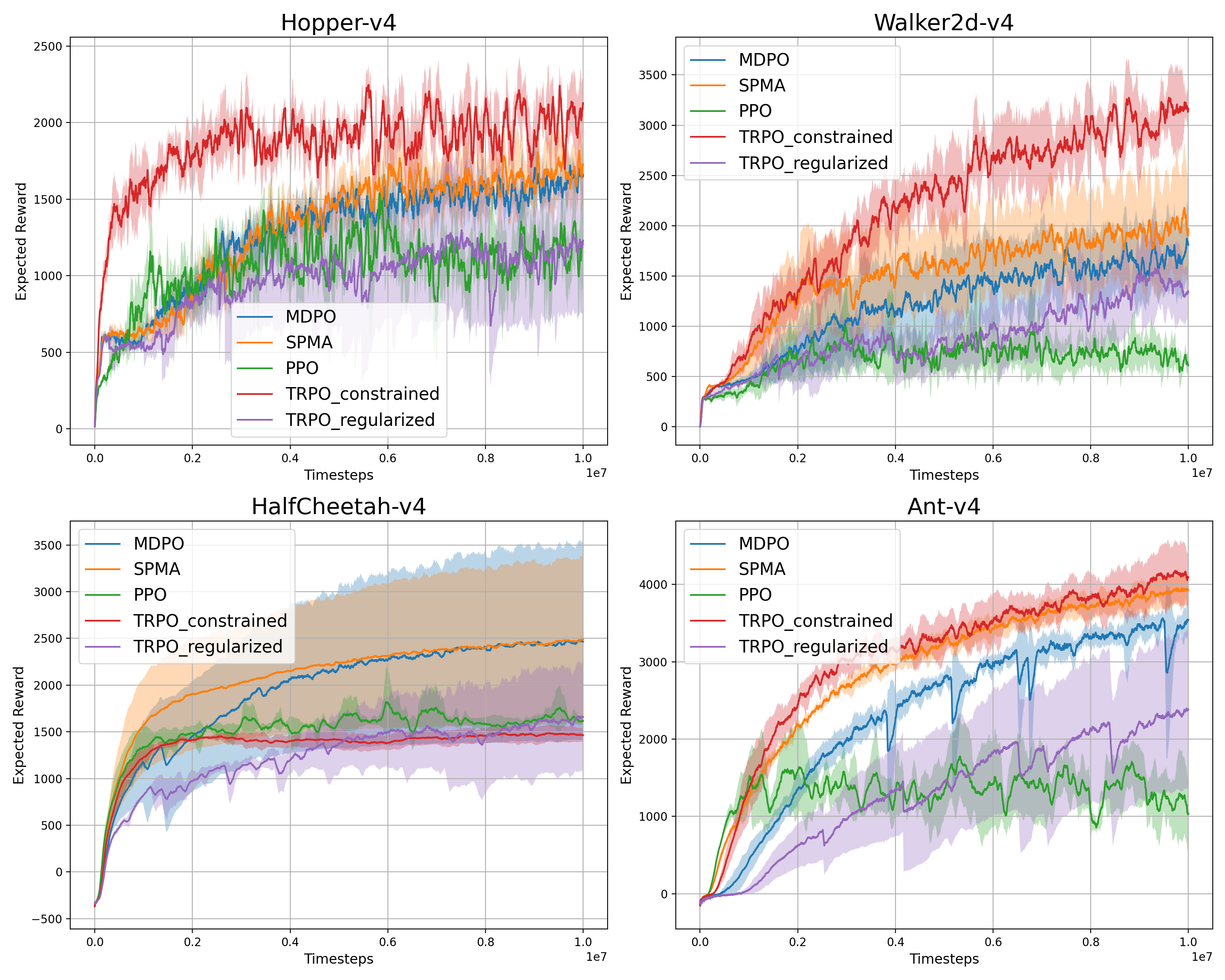}
\caption{On MuJoCo control tasks, where a two-layer MLP actor network is used, $\Alg$ matches or outperforms $\MDPO$ while consistently outperforming $\PPO$ and regularized $\TRPO$. In contrast to the results on Atari games, with a shallow MLP, $\TRPOC$ outperforms all methods.}
\label{fig:mdp-mujoco}
\end{figure*}

We evaluate $\Alg$\footnote{The code is available at 
\href{https://github.com/reza-asad/SPMA/tree/main}{\tt http://github.com/\newline reza-asad/SPMA}.} on three types of problems: (i) tabular MDPs with access to exact policy gradients, (ii) MDPs with continuous states but discrete actions, using inexact policy gradients, and (iii) MDPs with continuous state-actions spaces and inexact gradients. For tabular MDPs, we use the tabular parameterization and compare $\Alg$ against $\NPG$ and constant step-size $\SPG$~\citep{mei2020global}. For these environments, we also consider log-linear policies and compare $\Alg$ to $\MDPO$ and $\SPG$. For non-tabular environments, we consider $\PPO$, $\TRPO$ and $\MDPO$ as baselines. We consider two variants of $\TRPO$ -- $\TRPOC$, the standard optimized variant in~\citet{stable-baselines3} and $\TRPOR$, the original regularized variant. $\TRPOC$ is able to effectively enforce the trust region constraint using conjugate gradient, but introduces additional hyper-parameters, requires code-level optimization techniques and is computationally expensive. On the other hand, $\TRPOR$ is significantly more efficient and theoretically principled~\citep{lazic2021optimization}, and is similar to $\Alg$'s objective (see~\cref{sec:fa-comparison}). For details regarding the hyper-parameters of all methods for each environment, refer to~\cref{app:tabular-mdp-experiments,app:stable-baseline-experiment-details}. 




\textbf{Tabular MDP Results:} We present the results in~\cref{app:tabular-mdp-experiments}, and summarize the key findings here. We observe that $\Alg$ and $\NPG$ achieve comparable performance, both consistently outperforming $\SPG$ (\cref{fig:mdp-tabular}). However, in the linear FA setting, $\Alg$ demonstrates superior performance compared to $\MDPO$ in the CliffWorld environment~\citep{sutton2018reinforcement} (\cref{fig:cw-m-25-50-lfa}), while performing similarly in the Frozen Lake environment~\citep{brockman2016openai} (\cref{fig:fl-m-25-50-lfa}). In both environments, $\Alg$ and $\MDPO$ consistently outperform $\SPG$.

In the remainder of this section, we focus on the non-tabular settings with inexact policy gradients. For these experiments, we follow the protocol of \citet{tomar2020mirror}, using 5 seeds and reporting the average results along with their 95\% confidence intervals. Additionally, we employ the actor-critic architecture, policy parameterization, and GAE~\citep{schulman2015high} (to estimate the advantage function) from stable baselines~\citep{stable-baselines3}. We emphasize that, in contrast to prior work, we do not make ad-hoc adjustments to $\Alg$ (i.e., the actor). To set the step-size $\eta$ in~\cref{alg:spma}, we perform a grid search over five values (fixed across all experiments) and set the inner loop step-size $\zeta$ using Armijo line search~\citep{armijo1966minimization}.

\textbf{Atari and Mujoco Results:} We evaluate the performance of $\Alg$ compared to the baselines across various Atari 2600 games~\citep{bellemare2013arcade} and MuJoCo~\citep{todorov2012mujoco} control tasks from OpenAI Gym~\citep{brockman2016openai}. The observation space for Atari games consists of a 210$\times$160$\times$3 image, representing the current state of the game. The action space in these environments is discrete, whereas in MuJoCo, it is continuous and by default represented by a diagonal Gaussian distribution in~\citet{stable-baselines3}. Additionally, the actor-critic network for Atari uses a CNN as a feature extractor, while MuJoCo employs an MLP. 



Comparing the results in Fig.~\ref{fig:mdp-atari} and~\ref{fig:mdp-mujoco}, our key observations are as follows: i) $\Alg$ consistently outperforms or matches $\MDPO$ and $\PPO$ across all environments; ii) although $\TRPOC$ achieves superior performance on MuJoCo, its performance degrades considerably on Atari games. We conjecture that the conjugate gradient algorithm in $\TRPOC$ performs poorly when the actor network is a CNN rather than a two-layer MLP; iii) $\TRPOR$, which has a similar objective as $\Alg$ (see~\cref{sec:fa-comparison} for a comparison) does not perform as well on MuJoCo and has considerably worse performance on Atari. Hence, we observe that replacing the sampling ratio by its $\log$ can result in substantial empirical gains. This behaviour has also been observed for $\PPO$~\citet{vaswani2021general}. Overall, our experiments demonstrate that, despite being theoretically grounded, $\Alg$ exhibits strong empirical performance across various environments without relying on ad-hoc adjustments.

\vspace{-2ex}
\section{DISCUSSION}
\label{sec:discussion}
\vspace{-2ex}
We developed $\Alg$, a PG method that corresponds to mirror ascent in the dual space of logits. We believe that our paper bridges the gap between theoretical PG methods and practical objectives by presenting a method that offers strong theoretical convergence guarantees while delivering competitive practical performance (compared to $\PPO$, $\TRPO$, $\MDPO$), without relying on additional heuristics or algorithmic modifications. In the future, we aim to develop techniques for adaptively tuning the step-size and avoiding expensive grid-searches. We also plan to develop and analyze an off-policy variant of $\Alg$.

\section*{ACKNOWLEDGEMENTS}
\label{sec:ack}
This research was partially supported by the Natural Sciences and Engineering Research Council of Canada (NSERC) Discovery Grant RGPIN-2022-04816. 

\clearpage
\bibliography{ref}

\clearpage
\appendix
\thispagestyle{empty}
\onecolumn
\appendix
\newcommand{\appendixTitle}{%
\vbox{
    \centering
	\hrule height 4pt
	\vskip 0.2in
	{\LARGE \bf Supplementary Material}
	\vskip 0.2in
	\hrule height 1pt 
}}
\appendixTitle
\section*{Organization of the Appendix}\label{appendix:org}
\begin{itemize}
   \item[\ref{app:multi-armed-bandit-proofs}] \hyperref[app:multi-armed-bandit-proofs]{Multi-armed Bandit Proofs}
   
   \item[\ref{app:mdp-proof}] \hyperref[app:mdp-proof]{MDP Proofs} 

    \item[\ref{app:tabular-mdp-experiments}] \hyperref[app:tabular-mdp-experiments]{Tabular MDP Experiments} 

    \item[\ref{app:stable-baseline-experiment-details}] \hyperref[app:stable-baseline-experiment-details]{Additional Details for Stable Baselines Experiments}

\end{itemize}
\section{Multi-armed Bandit Proofs}
\label{app:multi-armed-bandit-proofs}
\smdpoconstbandit*
\begin{proof}. As in equation $\eqref{eq:tab-mab-spma}$, we can write the update for arm $a$ as following where $\Delta(a,a')= r(a) - r(a')$,  
    \begin{align*}
        \ppitt(a) &= \ppit(a)\left[ 1 + \eta \sum_{a' \ne a} \ppit(a') \Delta(a, a')\right]         
    \end{align*}
    \begin{align}
        1 - \ppitt(a^*) &= 1 - \ppit(a^*) - \eta \, \ppit(a^*)\left[ \sum_{a' \ne a^*} \ppit(a') \Delta(a^*, a') \right] \label{prob-bad-action}
    \end{align}
    We first find a lower-bound for $\sum_{a' \ne a^*} \ppit(a') \Delta(a^*, a')$:
    \begin{equation}
    \begin{aligned}
        \sum_{a' \ne a^*} \ppit(a') \Delta(a^*, a') &\ge \Delta_{\min} \sum_{a' \ne a^*} \ppit(a')\\
            &= \Delta_{\min} (1 - \ppit(a^*)) \label{pi-delta-lower-bound}
    \end{aligned}
    \end{equation}
    Next, we observe that $\sum_{a' \ne a^*} \ppit(a') \Delta(a^*, a') \ge 0$. Using this information and starting with a uniform initialization for selecting an arm implies a monotonic improvement on the probability of selecting the optimal arm:
    \begin{align}
        \ppitt(a^*) > \ppit(a^*) > ... > \pi_0(a^*) = \frac{1}{K} \label{monotonic-prob-good-arm}
    \end{align}
    Let $\epsilon_t = 1 - \ppit(a^*)$. 
    \begin{align*}
        \epsilon_{t+1} &= \epsilon_t  - \eta \, \ppit(a^*)\left[ \sum_{a' \ne a^*} \ppit(a') \Delta(a^*, a') \right]\\
                         & \leq  \epsilon_t - \frac{\eta}{K} \, \left[ \sum_{a' \ne a^*} \ppit(a') \Delta(a^*, a') \right]\tag{using $\eqref{monotonic-prob-good-arm}$}\\  
                        &\le \epsilon_t -\frac{\eta \Delta_{\min}}{K}\epsilon_t\tag{using \eqref{pi-delta-lower-bound}}\\
                       &= \epsilon_t \, \left(1-\frac{\eta \Delta_{\min}}{K} \right)\\
    \end{align*}
    Recursing from $t=0$ to $t=T-1$ we have:
    \begin{align*}
        \epsilon_{T} &\le \epsilon_0 \, \left(1-\frac{\eta \Delta_{\min}}{K} \right)^T\\
                     &\le \epsilon_0 \exp\left(\frac{-\eta \Delta_{\min} T}{K}\right) \tag{using $1-x \le \exp(-x)$}\\
                     &= \left(1 - \frac{1}{K}\right) \exp\left(\frac{-\eta \Delta_{\min} T}{K}\right)
    \end{align*}
    Finally, we define the sub-optimality gap, $\delta_T:= r(a^*) - \langle \pi_T, r \rangle$:
    \begin{align*}
        \delta_T &= \sum_{a'} \pi_T(a') \left[r(a^*) - r(a')\right]\\
             &= \sum_{a' \ne a^*} \pi_T(a)\Delta(a^*, a)\\
             &\leq \max_{a'} \Delta(a^*, a') \sum_{a' \ne a^*} \pi_T(a)\\
             & = \max_{a'} \Delta(a^*, a') (1-\pi_{T}(a^*))\\
             &\le 1-\pi_{T}(a^*) \tag{using the fact $0\le r\le1$}\\
                 & = \epsilon_T \\
                 &\leq \left(1 - \frac{1}{K}\right) \exp\left(\frac{-\eta \Delta_{\min} T}{K}\right)
    \end{align*}
\end{proof}

\subsection{Super-linear Rate for Bandits}
\label{app:tab-mab-spma-super}
In order to achieve the desired fast rate of convergence, we modify the update in~\cref{eq:tab-mab-spma} to use a set of ${K \choose 2}$ constant gap-dependent step-sizes $\{\eta_{a,a'}\}_{a,a' \in [K]}$. The new update can be written as:
\begin{align}
\ppitt(a) &= \ppit(a) \; [1 + \sum_{a' \neq a} \ppit(a') \, \eta_{a,a'} \, \Delta(a, a') ]
\label{smdpo-bandit-update-gap}    
\end{align}
The following theorem shows that the above update results in super-linear convergence.
\begin{restatable}{theorem}{deltadepbandit}
\label{delta-dependent-eta-bandit}
    Using the $\Alg$ update in~\cref{smdpo-bandit-update-gap} with (i) $\eta_{a, a'} = \frac{1}{|\Delta(a, a')|}$ and a (ii) uniform initialization similar to~\cref{thm:tab-spma-bandit-linear} results in valid probability distributions and converges as:
    \begin{align*}
        r(a^*) - \inner{\pi_T}{r}  \leq \left[ \left(1 - \frac{1}{K} \right)\right]^{2^T}
    \end{align*}
    where $T$ is the number of iterations, $a^*$ is the optimal arm and $\Delta(a, a') := r(a) - r(a')$ represents the reward gap between arms $a$ and $a'$.  
\end{restatable}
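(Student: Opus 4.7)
The plan is to observe that under the choice $\eta_{a,a'} = 1/|\Delta(a,a')|$, the bracketed factor in~\eqref{smdpo-bandit-update-gap} collapses to a remarkably simple form, namely $1 + \sum_{a' \neq a} \pi_t(a')\,\mathrm{sign}(\Delta(a,a'))$. I would first verify that this yields a valid probability distribution. Non-negativity follows because the signed sum is bounded below by $-\sum_{a' \neq a} \pi_t(a') \geq -1$. For the total mass, I would write $\sum_a \pi_{t+1}(a) - 1$ as a double sum over ordered pairs $(a,a')$ with $a \neq a'$ of $\pi_t(a)\pi_t(a')\,\eta_{a,a'}\Delta(a,a')$ and pair each $(a,a')$ with $(a',a)$; since $\eta_{a,a'} = \eta_{a',a}$ and $\Delta(a,a') = -\Delta(a',a)$, the double sum vanishes by antisymmetry, so the masses sum to $1$.

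Next, I would specialize the update to the optimal arm $a^*$. Since $\Delta(a^*,a') > 0$ for every $a' \neq a^*$ (assuming a unique maximizer, matching the setup in~\cref{thm:tab-spma-bandit-linear}), every sign inside the bracket equals $+1$, giving the exact identity
\begin{align*}
\pi_{t+1}(a^*) &= \pi_t(a^*)\Bigl[1 + \sum_{a' \neq a^*} \pi_t(a')\Bigr] = \pi_t(a^*)\bigl(2 - \pi_t(a^*)\bigr).
\end{align*}
This is the crux of the argument and the main engine of the super-linear rate. Setting $\epsilon_t := 1 - \pi_t(a^*)$, a one-line manipulation yields
\begin{align*}
\epsilon_{t+1} = 1 - \pi_t(a^*)\bigl(2 - \pi_t(a^*)\bigr) = \bigl(1 - \pi_t(a^*)\bigr)^2 = \epsilon_t^2.
\end{align*}

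Iterating the quadratic recursion from the uniform initialization $\pi_0(a^*) = 1/K$, i.e. $\epsilon_0 = 1 - 1/K$, gives $\epsilon_T = (1 - 1/K)^{2^T}$. To finish, I would mimic the last step of the proof of~\cref{thm:tab-spma-bandit-linear}: expand $r(a^*) - \inner{\pi_T}{r} = \sum_{a \neq a^*} \pi_T(a)\Delta(a^*,a)$, bound each $\Delta(a^*,a) \leq 1$ using $r \in [0,1]$, and conclude $r(a^*) - \inner{\pi_T}{r} \leq 1 - \pi_T(a^*) = \epsilon_T$.

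The main obstacle is not algebraic but conceptual: everything hinges on recognizing that the gap-dependent step-size is precisely what turns the SPMA update on $a^*$ into the map $p \mapsto p(2-p)$, which is the standard doubling recursion with fixed point $1$. Once that is spotted, both validity and the super-linear rate follow by short calculations; one only has to take minor care in handling the uniqueness of $a^*$ (or, if ties are allowed, in defining $\mathrm{sign}(0) = 0$ so that the contributions from tied arms vanish without affecting any of the steps above).
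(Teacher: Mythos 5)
Your proposal is correct and follows essentially the same route as the paper's proof: the collapse of the update to $1+\sum_{a'\neq a}\pi_t(a')\,\mathrm{sign}(\Delta(a,a'))$, the antisymmetry argument for mass conservation, the identity $\pi_{t+1}(a^*)=\pi_t(a^*)(2-\pi_t(a^*))$, and the final bound via $1-\pi_T(a^*)$ all match. The only (cosmetic) difference is that you solve the recursion directly by observing $\epsilon_{t+1}=\epsilon_t^2$, whereas the paper guesses the closed form $\pi_t(a^*)=1-(1-\tfrac{1}{K})^{2^t}$ and verifies it by induction; your handling of non-negativity and of possible ties via $\mathrm{sign}(0)=0$ is slightly more explicit than the paper's.
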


\begin{proof}
We define $\Delta(a, a'):= r(a) - r(a')$,  
\begin{align*}
    \apit & = r(a) - \langle \ppit, r \rangle \\
    & =\sum_{a'} \ppit(a')[r(a) - r(a')]\\
    & = \sum_{a'} \ppit(a')\Delta(a,a')
\end{align*}


Choosing different step sizes for every pair of arms, depending on their corresponding gap, $\eta_{a, a'} = \frac{1}{|\Delta(a, a')|}$ we get the following update for $\ppitt(a)$:
\begin{align*}
    \ppitt(a) &= \ppit(a)\left[ 1 + \sum_{a' \ne a} \eta_{a, a'} \ppit(a') \Delta(a, a')\right]\\
              &= \ppit(a)\left[ 1 + \sum_{a' \ne a} \ppit(a') \sign{(\Delta(a, a'))}\right] \label{smdpo-bandit-update-expanded}\tag{i}\\
\end{align*}
Now we check if $\ppitt$ is a probability distribution with this choice of $\eta$. Note that $\Delta(a,a')= - \Delta(a',a)$. 
\begin{align*}
    \sum_a \ppitt(a)& = \sum_a \ppit (a) + \sum_a  \ppit(a) \sum_{a' \ne a} \ppit(a') \sign{(\Delta(a, a'))} \\
    & = 1 + \sum_{(a,a'), a\neq a'} \ppit(a) \ppit(a') ( \sign{(\Delta(a, a'))} + \sign{(\Delta(a', a))}) \\
    & = 1 + \sum_{(a,a'), a\neq a'} \ppit(a) \ppit(a') ( \sign{(\Delta(a, a'))} - \sign{(\Delta(a, a'))}) \tag{ since $\Delta(a,a')= - \Delta(a',a)$} \\
    & = 1 
\end{align*}
Furthermore, it is clear that  $\ppit(a) \in [0,1]$. Based on this we just need to show that the probability of the optimal arm $a^*$ converges to $1$.   

Computing the probability of pulling the optimal arm using update (i):
\begin{align*}
    \ppitt(a^*) &= \ppit(a^*)\left[ 1 + \sum_{a' \ne a^*} \ppit(a') \sign{(\Delta(a^*, a'))}\right]\\
                &= \ppit(a^*)\left[ 1 + \sum_{a' \ne a^*} \ppit(a') \right] \tag{$\Delta(a^*, a') > 0 \,\,\, \forall a'$}\\ 
                &= \ppit(a^*)\left[ 2 - \ppit(a^*) \right] \tag{ii}\\
\end{align*}
We use induction to show $\ppit(a^*) = 1-\left[(1 - \frac{1}{K})\right]^{2^t}$ solves the recurrence relation (ii). We consider the uniform distribution over the arms at the initialization i.e. $\pi_0(a) = \frac{1}{K}, \,\,\, \forall a \in \cA$. For the base case, we show the suggested solution satisfies recursion (ii): 
\begin{align*}
    \pi_1(a^*) &= \frac{1}{K} \, \left(2 - \frac{1}{K}\right) \tag{using the recursion in (ii)}\\
                   &= \left(1 - 1 + \frac{1}{K}\right) \left(1 + 1 - \frac{1}{K}\right) \\
                   &= 1 - \left[\left(1 - \frac{1}{K}\right)\right]^2
\end{align*}
Assuming the suggested solution is true for $t$, we show it is also true for $t+1$:
\begin{align*}
    \ppitt(a^*) &= \left[1 - \left(1 - \frac{1}{K}\right)^{2^t}\right] \, \left[2 - 1 + \left(1 - \frac{1}{K}\right)^{2^t}\right]\\
                &= 1 - \left[\left(1 - \frac{1}{K}\right)^{2^{t+1}}\right]
\end{align*}
Let $\delta_T:= r(a^*) - \langle \pi_T, r \rangle$ represent the sub-optimality gap.
\begin{align*}
    \delta_T &= \sum_{a'} \pi_T(a') \left[r(a^*) - r(a')\right]\\
             &= \sum_{a' \ne a^*} \pi_T(a)\Delta(a^*, a)\\
             &\leq \max_{a'} \Delta(a^*, a') \sum_{a' \ne a^*} \pi_T(a)\\
             &\le 1-\pi_{T}(a^*) \tag{using the fact $0\le r\le1$}\\
             &= \left[\left(1 - \frac{1}{K}\right)\right]^{2^T}\tag{using the formula for $\pi_{T}(a^*)$} 
\end{align*} 
\end{proof}


\section{MDP Proofs}
\label{app:mdp-proof}
\subsection{Tabular Setting}
\label{app:mdp-tab}
\begin{lemma}
    \label{lem:avg-adv}
    For any policy $\ppit$ we have 
    \begin{align*}
         \sum_{a} \ppit(a|s) [\apit(s, a)]^2 \ge \,\, C_t \max_{a} \apit(s, a)
    \end{align*}
    where $C_t := \min_s \{ \ppit(\tilde{\cA}_t(s) | s)  \, \Delta^t(s) \}$, $\tilde{\cA}_t(s) := \argmax_{a \in \cA} \qpit(s, a)$, $\ppit(\tilde{\cA}_t(s) | s)= \sum_{a \in \tilde{\cA}_t(s)} \ppit(a_t(s) | s)$ and $\Delta^t(s) := \max_{a\in \cA} \qpit(s, a) - \max_{a\notin \tilde{\cA}} \qpit(s, a)$.
\end{lemma}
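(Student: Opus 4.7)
Let $M(s) := \max_a \apit(s,a) = \max_a \qpit(s,a) - V^{\pi_t}(s)$, and note the key identity $\sum_a \ppit(a\mid s)\apit(s,a) = 0$. My plan is to split the sum over actions into a piece over $\tilde{\cA}_t(s)$ (where the advantage equals $M(s)$) and a piece over $a \notin \tilde{\cA}_t(s)$, then recombine them so that the final factor matches $\tilde C_t(s) := \ppit(\tilde{\cA}_t(s)\mid s)\,\Delta^t(s) \geq C_t$.

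First I would write
\[
\sum_{a}\ppit(a\mid s)[\apit(s,a)]^2 \;=\; \ppit(\tilde{\cA}_t(s)\mid s)\,M(s)^2 \;+\; \sum_{a\notin\tilde{\cA}_t(s)}\ppit(a\mid s)[\apit(s,a)]^2,
\]
using that $\apit(s,a)=M(s)$ for every $a\in\tilde{\cA}_t(s)$. For the second term I would apply Jensen/Cauchy--Schwarz to obtain
\[
\sum_{a\notin\tilde{\cA}_t(s)}\ppit(a\mid s)[\apit(s,a)]^2 \;\geq\; \frac{\bigl(\sum_{a\notin\tilde{\cA}_t(s)}\ppit(a\mid s)|\apit(s,a)|\bigr)^2}{1-\ppit(\tilde{\cA}_t(s)\mid s)},
\]
and then use $\sum_a\ppit(a\mid s)\apit(s,a)=0$ to identify $\sum_{a\notin\tilde{\cA}_t(s)}\ppit(a\mid s)\apit(s,a) = -\ppit(\tilde{\cA}_t(s)\mid s)\,M(s)$, so the numerator is at least $[\ppit(\tilde{\cA}_t(s)\mid s)\,M(s)]^2$. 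Adding the two pieces and simplifying telescopes nicely into
\[
\sum_{a}\ppit(a\mid s)[\apit(s,a)]^2 \;\geq\; \frac{\ppit(\tilde{\cA}_t(s)\mid s)}{1-\ppit(\tilde{\cA}_t(s)\mid s)}\,M(s)^2.
\]

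The remaining step, which I expect to be the main obstacle because it is where the gap $\Delta^t(s)$ enters, is to show $M(s) \geq (1-\ppit(\tilde{\cA}_t(s)\mid s))\,\Delta^t(s)$. I would prove this by expanding
\[
M(s) = (1-\ppit(\tilde{\cA}_t(s)\mid s))\max_a \qpit(s,a) - \sum_{a\notin\tilde{\cA}_t(s)}\ppit(a\mid s)\,\qpit(s,a)
\]
and bounding $\qpit(s,a) \leq \max_{a'\notin\tilde{\cA}_t(s)}\qpit(s,a')$ on the non-optimal actions, which produces exactly $(1-\ppit(\tilde{\cA}_t(s)\mid s))\,\Delta^t(s)$. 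Plugging this lower bound into one copy of $M(s)$ in the display above cancels the $1-\ppit(\tilde{\cA}_t(s)\mid s)$ denominator and yields $\ppit(\tilde{\cA}_t(s)\mid s)\,\Delta^t(s)\,M(s) \geq C_t\,M(s)$, by the definition of $C_t$ as the minimum of $\tilde C_t(s)$ over $s$. A quick sanity check in the two-arm case shows both sides of the target inequality coincide, which confirms that the Cauchy--Schwarz step is tight enough and no slack is being wasted.
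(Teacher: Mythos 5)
Your proposal is correct and follows essentially the same route as the paper: split the sum over $\tilde{\cA}_t(s)$ versus its complement, lower-bound the second moment on the complement by the squared first moment (your Cauchy--Schwarz step is the paper's Jensen step on the conditional distribution $\tilde{\pi}_t$), and invoke $\sum_a \ppit(a|s)\apit(s,a)=0$ to identify the complement's mass of advantage as $-\ppit(\tilde{\cA}_t(s)|s)M(s)$. Your intermediate bound $\frac{\ppit(\tilde{\cA}_t(s)|s)}{1-\ppit(\tilde{\cA}_t(s)|s)}M(s)^2$ is algebraically identical to the paper's $\ppit(\tilde{\cA}_t(s)|s)\,M(s)\,\bigl(M(s)-\E_{a\sim\tilde{\pi}_t}[\apit(s,a)]\bigr)$, and your auxiliary inequality $M(s)\ge (1-\ppit(\tilde{\cA}_t(s)|s))\Delta^t(s)$ is equivalent to the paper's bound $\E_{a\sim\tilde{\pi}_t}[\apit(s,a)]\le \max_{a\notin\tilde{\cA}_t(s)}\apit(s,a)$, so the two arguments coincide step for step.
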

\begin{proof}
    Recall $\tilde{\cA}_t(s) := \argmax_{a\in \cA} \apit(s, a)$ i.e. $\tilde{\cA}_t(s)$ is a set containing actions with maximum advantage for state $s$. Let's define $ \ppit(\tilde{\cA}_t(s)|s) = \sum_{a \in\tilde{\cA}_t(s) }\ppit(\tilde{a}_t(s)|s)$. We can split the sum on the LHS of the above over $\tilde{\cA}_t(s)$:
    \begin{equation}
    \begin{aligned}
         \sum_{a} \ppit(a|s) [\apit(s, a)]^2 &=  \,\, \sum_{a \in \tilde{\cA}_t(s) }\ppit(\tilde{a}_t(s)|s) [\max_{a} \apit(s, a)][\max_{a} \apit(s, a)]\\
        &+  \sum_{a \notin \tilde{\cA}_t(s)} \ppit(a|s) [\apit(s, a)]^2 \\
        & = \ppit(\tilde{\cA}_t(s)|s) [\max_{a} \apit(s, a)][\max_{a} \apit(s, a)]\\
        &+  \sum_{a \notin \tilde{\cA}_t(s)} \ppit(a|s) [\apit(s, a)]^2 \label{split-over-tilde-a}
    \end{aligned}
    \end{equation}
    Let $\tilde{\ppit}$ be the following distribution over the actions.
    \begin{align*}
    \tilde{\ppit}(a | s)= 
        \begin{cases}
        0     & \text{if } a \in \tilde{\cA}_t(s)\\
        \frac{\ppit(a|s)}{1-\ppit(\tilde{\cA}_t(s)| s)}   & \text{otherwise}
        \end{cases}
    \end{align*}
    Re-writing $\sum_a \ppit(a|s) \apit(s, a) = 0$ using the above distribution we obtain:
    \begin{align*}
        (1-\ppit(\tilde{\cA}_t(s)|s))\,\, \E_{a\sim\tilde{\ppit}} [\apit(s, a)] + \ppit(\tilde{\cA}_t(s)|s)[\max_{a} \apit(s, a)] = 0
    \end{align*}
    \begin{align}
        (1-\ppit(\tilde{\cA}_t(s)|s))\,\, \E_{a\sim\tilde{\ppit}} [\apit(s, a)] = -\ppit(\tilde{\cA}_t(s)|s)[\max_{a} \apit(s, a)] \label{pi-tilde-relation}
    \end{align}
    Expanding the second term in Eq.~\ref{split-over-tilde-a} using $\tilde{\ppit}$ we obtain:
    \begin{align*}
         \sum_{a \notin \tilde{\cA}_t(s)} \ppit(a|s) [\apit(s, a)]^2 &=  \,\, (1-\ppit(\tilde{\cA}_t(s)|s))\,\, \E_{a\sim\tilde{\ppit}} [\apit(s, a)]^2\\
        &\ge  \,\, (1-\ppit(\tilde{\cA}_t(s)|s))\,\, \left(\E_{a\sim\tilde{\ppit}} [\apit(s, a)]\right)^2 \tag{using $\E[x^2] \ge (\E[x])^2$}\\
        &=  \,\, (1-\ppit(\tilde{\cA}_t(s)|s))\,\, \left(\E_{a\sim\tilde{\ppit}} [\apit(s, a)]\right) \left(\E_{a\sim\tilde{\ppit}} [\apit(s, a)]\right)\\
        &= -\,\, \ppit(\tilde{\cA}_t(s)|s)[\max_a \apit(s, a)]\left(\E_{a\sim\tilde{\ppit}} [\apit(s, a)]\right) \tag{using Eq.~\ref{pi-tilde-relation}}\\
    \end{align*}
    Plugging in the result above into Eq.~\ref{split-over-tilde-a} we obtain:
    \begin{align*}
         \sum_{a} \ppit(a|s) [\apit(s, a)]^2 &\geq  \,\, \ppit(\tilde{\cA}_t(s)|s) [\max_{a} \apit(s, a)][\max_{a} \apit(s, a)]\\
        &-  \,\, \ppit(\tilde{\cA}_t(s)|s)[\max_a \apit(s, a)]\left(\E_{a\sim\tilde{\ppit}} [\apit(s, a)]\right)\\
        &\geq  \,\, \ppit(\tilde{\cA}_t(s)|s) [\max_a \apit(s, a)] \left[ \max_a \apit(s, a) - \E_{a\sim\tilde{\ppit}} [\apit(s, a)]\right]\\  
        & \geq  \,\, \ppit(\tilde{\cA}_t(s)|s) [\max_a \apit(s, a)] \left[ \underbrace{\max_a \apit(s, a) - \max_{a\notin \tilde{\cA}} \apit(s, a)}_{:=\Delta^t(s)}\right]\\
        & =  \,\, \ppit(\tilde{\cA}_t(s)|s) [\max_a \apit(s, a)] \Delta^t(s)\\
        &\ge  \,\, C_t \max_a \apit(s, a)
    \end{align*}
\end{proof}

\begin{lemma}
\label{lem:lin-improve}
     Using the update $\ppitt(a | s) = \ppit(a | s) \, (1 + \eta \apit(s, a)) $ with a step-size $\eta < \min\left\{1 - \gamma, \frac{1}{C_t (1-\gamma)} \right\}$, at any iteration $t$ and state $s \in S$, we have
      \begin{align*}
          V^{*}(s) - V^{\pi_{t+1}}(s) \leq \left[1 - \eta\,\,C_t(1-\gamma)\right]\left[V^{*}(s) - V^{\pi_{t}}(s)\right]
      \end{align*}  
      where $C_t := \min_s \{ \ppit(\tilde{\cA}_t(s) | s)  \, \Delta^t(s) \}$, $\tilde{\cA}_t(s) := \argmax_a \qpit(s, a)$, $\ppit(\tilde{\cA}_t(s) | s)= \sum_{a \in \tilde{\cA}_t(s)} \ppit(a_t(s) | s)$, $\Delta^t(s) := \max_a \qpit(s, a) - \max_{a\notin \tilde{\cA}} \qpit(s, a)$, and $V^*(s)$ is the value function corresponding to the optimal policy $\pi^*$ at $s \in \cS$.
\end{lemma}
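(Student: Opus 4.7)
The plan is to show a per-state one-step improvement of the form $V^{\pi_{t+1}}(s) - V^{\pi_t}(s) \ge \eta\,C_t\,\max_a A^{\pi_t}(s,a)$, and then to relate the pointwise max-advantage to the pointwise value suboptimality gap via a Bellman contraction argument. I would start from the performance difference lemma applied with respect to the starting state $s$:
\[
V^{\pi_{t+1}}(s)-V^{\pi_t}(s) \;=\; \tfrac{1}{1-\gamma}\!\!\sum_{s'} d^{\pi_{t+1}}_s(s')\sum_a \pi_{t+1}(a\vert s')\,A^{\pi_t}(s',a).
\]
Plugging in the SPMA update $\pi_{t+1}(a\vert s')=\pi_t(a\vert s')(1+\eta A^{\pi_t}(s',a))$ and using $\sum_a\pi_t(a\vert s')A^{\pi_t}(s',a)=0$, the inner sum collapses to $\eta\sum_a\pi_t(a\vert s')\,[A^{\pi_t}(s',a)]^2$, which by the preceding Lemma A.3 is lower-bounded by $\eta\,C_t\,\max_a A^{\pi_t}(s',a)\ge 0$. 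I would then use the standard lower bound $d^{\pi_{t+1}}_s(s)\ge 1-\gamma$ together with the non-negativity of the max-advantage (which follows from $\sum_a\pi_t(a\vert s')A^{\pi_t}(s',a)=0$) to drop all summands other than $s'=s$, giving
\[
V^{\pi_{t+1}}(s)-V^{\pi_t}(s) \;\ge\; \eta\,C_t\,\max_a A^{\pi_t}(s,a).
\]

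The next step is to relate $\max_a A^{\pi_t}(s,a)$ to $V^*(s)-V^{\pi_t}(s)$. Writing $\max_a Q^{\pi_t}(s,a)=\max_a[r(s,a)+\gamma\sum_{s'}P(s'\vert s,a)V^{\pi_t}(s')]$ and applying the elementary inequality $\max_a f(a)-\max_a g(a)\le \max_a[f(a)-g(a)]$ to the Bellman optimality operator, one obtains
\[
V^*(s)-V^{\pi_t}(s) \;\le\; \max_a A^{\pi_t}(s,a) \;+\; \gamma\,\norminf{V^*-V^{\pi_t}}.
\]
Rearranging gives $\max_a A^{\pi_t}(s,a)\ge [V^*(s)-V^{\pi_t}(s)] - \gamma\norminf{V^*-V^{\pi_t}}$. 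Substituting this into the improvement bound and rearranging $V^*(s)-V^{\pi_{t+1}}(s)=[V^*(s)-V^{\pi_t}(s)]-[V^{\pi_{t+1}}(s)-V^{\pi_t}(s)]$ yields the desired inequality after collecting the $(1-\gamma)$ factor; this is what chains into the $\ell_\infty$ recursion used in Theorem 4.2.

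The hard part is the Bellman step, since the clean per-state bound $\max_a A^{\pi_t}(s,a)\ge (1-\gamma)[V^*(s)-V^{\pi_t}(s)]$ is only obviously true at the state achieving the $\ell_\infty$ optimality gap. I expect the argument to be cleanest when the pointwise statement is combined with an $\ell_\infty$ maximum so that the $\gamma\norminf{V^*-V^{\pi_t}}$ slack is absorbed on the right, producing the claimed $\alpha_t=1-\eta C_t(1-\gamma)$ contraction factor. Two minor but necessary verifications along the way are: (i) that the step-size restriction $\eta<\min\{1-\gamma,1/(C_t(1-\gamma))\}$ keeps $1+\eta A^{\pi_t}(s,a)\ge 0$ (so $\pi_{t+1}$ is a valid distribution and $Q^{\pi_{t+1}}\ge Q^{\pi_t}$, justifying the monotonic improvement needed to drop off-state terms), and (ii) that the definition of $C_t$ with $\tilde{\cA}_t(s)$ handles ties in the argmax, which Lemma A.3 already covers.
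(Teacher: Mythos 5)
Your proposal follows essentially the same route as the paper's proof: the value difference lemma, the collapse $\sum_a \pi_{t+1}(a|s)A^{\pi_t}(s,a)=\eta\sum_a\pi_t(a|s)[A^{\pi_t}(s,a)]^2\ge\eta\,C_t\max_a A^{\pi_t}(s,a)$ via \cref{lem:avg-adv}, the identity $\max_a A^{\pi_t}(s,a)=TV^{\pi_t}(s)-V^{\pi_t}(s)$, and the $\gamma$-contraction of the Bellman optimality operator. The only mechanical difference is how you isolate the local improvement at state $s$: you drop the off-state summands of the performance difference lemma using $d_s^{\pi_{t+1}}(s)\ge 1-\gamma$ and non-negativity of each term, whereas the paper reaches the equivalent inequality $V^*(s)-V^{\pi_{t+1}}(s)\le V^*(s)-V^{\pi_t}(s)-\sum_a\pi_{t+1}(a|s)A^{\pi_t}(s,a)$ via monotonicity of the operator $T^{\pi_{t+1}}$ combined with $V^{\pi_{t+1}}\ge V^{\pi_t}$. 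The substantive point is the final contraction step, and your caution there is warranted: the contraction only yields $TV^*(s)-TV^{\pi_t}(s)\le\gamma\norminf{V^*-V^{\pi_t}}$, so your argument delivers the $\ell_\infty$ recursion $\norminf{V^*-V^{\pi_{t+1}}}\le[1-\eta C_t(1-\gamma)]\norminf{V^*-V^{\pi_t}}$ rather than the pointwise inequality as stated in the lemma. The paper's own proof applies the contraction pointwise, writing $TV^*(s)-TV^{\pi_t}(s)\le\gamma[V^*(s)-V^{\pi_t}(s)]$, which is not implied by the sup-norm contraction (the optimality gap at successor states can exceed the gap at $s$, e.g.\ when $\pi_t$ is locally optimal at $s$ but poor downstream). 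Since \cref{thm:smdpo-constant-mdp} only uses the sup-norm recursion, your version proves exactly what is needed and is in fact the more defensible formulation of this step. Two small items to record explicitly: that $1-\eta C_t\ge 0$ (which follows from $\eta\le 1-\gamma$ and $C_t\le\nicefrac{1}{1-\gamma}$), so the per-state bound can be maximized over $s$ on both sides, and that $V^*(s)-V^{\pi_{t+1}}(s)\ge 0$, so the maximum on the left is indeed the sup norm.
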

\begin{proof}
    First, we use the value difference Lemma to show the $\Alg$ update in~\cref{eq:tab-mdp-spma} leads to a monotonic improvement in the value function.
    \begin{align}
        V^{\pi_{t+1}}(s) - V^{\pi_{t}}(s) = \frac{1}{1-\gamma} \E_{s \sim \dpitt} \left[ \sum_{a} \ppitt(a|s) \apit(s, a) \right] \label{value-diff-lemma}
    \end{align}
    Plugging update~\cref{eq:tab-mdp-spma} into the term within the brackets, we obtain the following:
    \begin{align*}
        \sum_{a} \ppitt(a|s) \apit(s, a) &= \sum_{a} \ppit(a|s) \apit(s, a)[1 + \eta \apit(s, a)]\\
        &= \sum_{a} \ppit(a|s) \apit(s, a) + \eta \sum_{a} \ppit(a|s) [\apit(s, a)]^2\\
        &= \eta \sum_{a} \ppit(a|s) [\apit(s, a)]^2\\
        & > 0
    \end{align*}
    Hence, $V^{\pi_{t+1}}(s) \geq V^{\pi_t}(s)$. Using~\cref{lem:avg-adv}, we have: 
    \begin{align}
        \eta \sum_{a} \ppit(a|s) [\apit(s, a)]^2 \ge \eta \,\, C_t \max_{a} \apit(s, a)
    \end{align}
    
    Combining the above with the result from the value difference Lemma we have:
    \begin{equation}
    \begin{aligned}
        \sum_{a} \ppitt(a|s) \apit(s, a) &= \eta \sum_{a} \ppit(a|s) [\apit(s, a)]^2\\
        &\ge \eta \,\, C_t \max_a \apit(s, a) \label{ct-lower-bound}
    \end{aligned}
    \end{equation}
    We now show a linear convergence when using the update in~\cref{eq:tab-mdp-spma}. Let $T$ be the Bellman optimality operator defined as:
    \begin{align*}
        (Tv)(s) = \max_a \{r(s, a) + \gamma \sum_{s'} \Pr[s'|s, a] v(s')\}
    \end{align*}
    Applying the operator at iteration $t$ we obtain:
    \begin{equation}
    \begin{aligned}
        TV^{\pi_{t}}(s) - V^{\pi_{t}}(s) = \max_a \qpit(s, a) - V^{\pi_{t}}(s) = \max_a \apit(s, a) \label{bellman-diff}
    \end{aligned}
    \end{equation}
    Let $\tpi$ be an operator w.r.t $\pi$ defined as:
    \begin{align*}
        \tpi(v) = \sum_a \pi(a|s)r(s, a) + \gamma \sum_a \pi(a|s)\sum_{s'}\Pr[s'|s, a] v(s')
    \end{align*}
    Applying $\tpi$ to $V^{\pi'}(s)$ results in:
    \begin{align*}
        \tpi V^{\pi'}(s) &=  \sum_a \pi(a|s)r(s, a) + \gamma \sum_a \pi(a|s)\sum_{s'}\Pr[s'|s, a] V^{\pi'}(s)\\   
                       &= \sum_a \pi(a|s) Q^{\pi'}(s, a)
    \end{align*}
    Using the above we obtain:
    \begin{align*}
        \tpitt V^{\pi_{t}}(s) - V^{\pi_{t}}(s) &= \sum_a \ppitt(a|s) \apit(s, a)\\
        &\ge \eta \,\, C_t \max_a \apit(s, a) \tag{using Ineq.~\ref{ct-lower-bound}}\\
        &= \eta \,\, C_t \left[TV^{\pi_{t}}(s) - V^{\pi_{t}}(s)\right] \tag{using Eq.~\ref{bellman-diff}}
    \end{align*}
    Assuming $\pi^*$ is the optimal policy we have:
    \begin{align*}
        V^{*}(s) - V^{\pi_{t+1}}(s) &= V^{*}(s) - \tpitt V^{\pi_{t+1}}(s) \tag{since $\tpi V^{\pi}(s) = V^{\pi}(s)$}\\
                            &\le V^{*}(s) - \tpitt V^{\pi_{t}}(s) \tag{since $V^{\pi_{t+1}}(s) \ge V^{\pi_{t}}(s) \,\,\, \forall s$}\\
                            &= V^{*}(s) - V^{\pi_{t}}(s) - \left[\tpitt V^{\pi_{t}}(s) - V^{\pi_{t}}(s) \right] \tag{add and subtract $V^{\pi_{t}}(s)$}\\
                            &\le V^{*}(s) - V^{\pi_{t}}(s) -\eta \,\, C_t \left[TV^{\pi_{t}}(s) - V^{\pi_{t}}(s)\right]\\
                            &= \eta\,\,C_t[V^{*}(s) - V^{\pi_{t}}(s)] + (1 - \eta\,\,C_t)[V^{*}(s) - V^{\pi_{t}}(s)] - \eta\,\,C_t \left[TV^{\pi_{t}}(s) - V^{\pi_{t}}(s)\right]\\
                            &= \eta\,\,C_t \left[TV^{*}(s) - V^{\pi_{t}}(s) - T V^{\pi_{t}}(s) + V^{\pi_{t}}(s) \right] + (1-\eta\,\,C_t)\left[V^{*}(s) - V^{\pi_{t}}(s)\right]\\
                            &=\eta\,\,C_t \left[TV^{*}(s) - T V^{\pi_{t}}(s)\right] + (1-\eta\,\,C_t)\left[V^{*}(s) - V^{\pi_{t}}(s)\right]\\
                            &\le \gamma \,\,\eta\,\,C_t \left[V^{*}(s) - V^{\pi_{t}}(s)\right] + (1-\eta\,\,C_t)\left[V^{*}(s) - V^{\pi_{t}}(s)\right] \tag{$T$ is a $\gamma$ contraction map}\\
                            &= \left[1 - \eta\,\,C_t(1-\gamma)\right]\left[V^{*}(s) - V^{\pi_{t}}(s)\right]
    \end{align*}
\end{proof}

\smdpoconstmdp*
\begin{proof}
    Using~\cref{lem:lin-improve} we have 
    \begin{align*}
            V^{*}(s) - V^{\pi_{t+1}}(s) \leq \left[1 - \eta\,\,C_t(1-\gamma)\right]\left[V^{*}(s) - V^{\pi_{t}}(s)\right]
    \end{align*}    
    If $\eta < \frac{1}{C_t(1-\gamma)}$, both sides of the inequality above are positive leading to $|V^{*}(s) - V^{\pi_{t+1}}(s)| \le (1 - \eta\,\,C_t(1-\gamma)) |V^{*}(s) - V^{\pi_{t}}(s)|$. This is true for all $s \in \cS$, hence we have:
    \begin{align*}
        \norminf{V^{\pi^*} - V^{\pi_{t+1}}} &\le (1 - \eta\,\,C_t(1-\gamma)) \norminf{V^{\pi^*} - V^{\pi_t}}\\
                                    &= \alpha_t \norminf{V^{\pi^*} - V^{\pi_t}}
    \end{align*}
    Recursing from $t=0$ to $t=T-1$ we obtain a linear convergence:
    \begin{align*}
        \norminf{V^{\pi^*} - V^{\pi_T}} &\le \left(\prod_{t=0}^{T-1} \alpha_t\right) \norminf{V^{\pi^*} - V^{\pi_0}}
    \end{align*}
\end{proof}

\subsection{Function Approximation With Exact Advantage}
\label{mdp-func-exact-adv}
Recall the definitions of $\tilde{\ell_t}$ and $\ell_t$
\begin{align*}
    & \tilde{\ell}_t(\theta) = \sum_{s} d^{\ppit}(s) \, \text{KL}(\ppith(\cdot|s) \, || \, \pi_{\theta}(\cdot|s)) \label{eq:spma-surrogate} \\
& \ell_t(\theta) = \sum_{s \sim \tau} \text{KL} \left(h(f_{\tht}(s,\cdot))  (1 + \eta  A^{\ppit}(s,\cdot)) \, || \, h(f_{\theta}(s,\cdot))  \right)\\
\end{align*}


\smdpofuncapp*

\begin{proof}
We assumed that $z_{\theta}(s,a) := f_{\theta}(s,a)\, \, \forall (s,a)$ and $z_t(s,a)=f_{\theta_{t+1}}(s,a)$ where $f_{\theta}: \R^{SA} \rightarrow \R$ is a complex, non-linear model. We remind the following updates: 

\begin{align*}
    &z_{t+1/2} = \argmax_{\bar{z} \in \mathbb R^{|S||A|}}{\left\{ \inner{\nabla_z J(z_t)}{z} -1/\eta D_{\Phi}(z,z_t)\right\}}\\
    & \nabla \Phi(z_{t+1/2}) = \nabla \Phi(z_t) + \eta \nabla_z J(z_t) \tag{Mirror Ascent update without projection}\\
    &\bppitt = h(z_{t+1/2})\tag{$h$ is softmax}\\
    &\bppitt(a|s) = \ppit(a|s)(1+\eta A^\ppit(s,a)) \\
    & \theta_{t+1} = \text{(S)GD}(\ell_t(\theta))  \tag{using (Stochastic)Gradient Descent for $m$ iteration to minimize $\ell_t$}\\
    &\ppitt = h(z_{t+1})\\
\end{align*}
$z_{t+1/2}$ is an unprojected update for the tabular setting and therefore using~\cref{lem:lin-improve} we have: 
\begin{align*}
    V^{*}(s) - V^{\bppitt}(s) &\leq   \left[1 - \eta\,\,C_t(1-\gamma)\right]\left[V^{*}(s) - V^{\pi_{t}}(s)\right]
\end{align*}
By adding and removing $V^{\pi_{t+1}}(s)$ to both sides and rearranging we have 
\begin{align*}
    V^{*}(s) - V^{\pi_{t+1}}(s) & \leq  \left[1 - \eta\,\,C_t(1-\gamma)\right]\left[V^{*}(s) - V^{\pi_{t}}(s)\right] + V^{\bppitt}(s) - V^{\pi_{t+1}}(s).\\
\end{align*}
Taking the expectation w.r.t. $\rho$, we obtain: 
\begin{align*}
    J({\pi^*}) - J(\pi_{t+1}) & \leq  \left[1 - \eta\,\,C_t(1-\gamma)\right]\left[J(\pi^*) - J(\pi_{t})\right] + \underbrace{J(\pi_{t+1/2}) - J(\pi_{t+1})}_{:=E_1}\\
\end{align*}
The term $E_1$ can be bounded as follows:  
\begin{align*}
    E_1 &=\sum_s \dbpitt(s) \inner{\qpitt(s,.)}{\bppitt(.|s)-\ppitt(.|s)}\\
    & \leq \sum_s \dbpitt(s) \norm{\qpitt(s,.)}_{\infty}\norm{\bppitt(.|s)-\ppitt(.|s)}_{1}\tag{Holder inequality}\\
    & \leq \frac{1}{1-\gamma}\sum_s \dbpitt(s)\norm{\bppitt(.|s)-\ppitt(.|s)}_{1} \tag{since $\norm{\qpitt(s,.)}_{\infty} \leq \frac{1}{1-\gamma}$}\\
    & =  \frac{1}{1-\gamma}\sum_s \frac{\dbpitt(s)}{\rho(s)}\rho(s) \norm{\bppitt(.|s)-\ppitt(.|s)}_{1}\\ 
    & \leq \frac{1}{1-\gamma} \norm{\frac{\dbpitt}{\rho}}_{\infty} \sum_s \rho(s) \norm{\bppitt(.|s)-\ppitt(.|s)}_{1}\\ 
   & \leq \frac{1}{(1-\gamma)\rho_{min}} \sum_s \rho(s) \norm{\bppitt(.|s)-\ppitt(.|s)}_{1}\tag{since $\dbpitt(s) \leq 1$ and using~\cref{as:exploration}}\\
   & \leq \frac{1}{(1-\gamma)^2\rho_{min}} \sum_s \dpit(s) \norm{\bppitt(.|s)-\ppitt(.|s)}_{1} \tag{sicne $\dpit \geq (1-\gamma) \rho$}\\
   & \leq \frac{\sqrt{2}}{(1-\gamma)^2\rho_{min}} \sum_s \dpit(s) \sqrt{ \text{KL}(\bppitt(.|s)\, || \, \ppitt(.|s))} \tag{using strong convexity of $ \text{KL}$ divergence or Pinsker's inequality}\\
   & \leq \frac{\sqrt{2}}{(1-\gamma)^2\rho_{min}} \sqrt{\underbrace{\sum_s \dpit(s)  \text{KL}(\bppitt(.|s)\, || \, \ppitt(.|s))}_{:=E_2}} \tag{due to concavity of $\sqrt{ }$ and Jensen's inequality}
\end{align*}
where $E_2$ can be bounded as follows. 
\begin{align*}
    E_2 &= \tilde{\ell}_t(\theta_{t+1}) \\
        &= \tilde{\ell}_t(\theta_{t+1}) - \min_{\theta}\tilde{\ell}_t(\theta_{t+1})+\min_{\theta}\tilde{\ell}_t(\theta_{t+1})\\
        & \leq \epsilon_{stat} +\min_{\theta}\tilde{\ell}_t(\theta_{t+1})\tag{using~\cref{as:samplingerror}}\\
        & \leq \epsilon_{stat} + \epsilon_{bias}\tag{using~\cref{as:bias}}
\end{align*}

Putting everything together we have: 
\begin{align*}
    E_1 \leq \underbrace{\frac{\sqrt{2}}{(1-\gamma)^2\rho_{min}} \sqrt{\epsilon_{\text{stat}}+ \epsilon_{\text{bias}}}}_{:=\beta}
\end{align*}
Therefore we have 
\begin{align*}
    J(\pi^*) - J(\pi_{t+1}) & \leq  \underbrace{\left[1 - \eta\,\,C_t(1-\gamma)\right]}_{\alpha_t}\left[J(\pi^*) - J(\pi_{t})\right] + \beta
\end{align*}
Unrolling the above recursion for $T$ iterations, 
\begin{align*}
    J(\pi^*) - J(\pi_{T}) \leq \left(\prod_{t=0}^{T-1} \alpha_t\right) (J(\pi^*) - J(\pi_0)) + \beta \sum_{t=0}^{T-1}\prod_{i=t+1}^{T-1}\alpha_t 
\end{align*}
\end{proof}

\subsection{Function Approximation With Inexact Advantage}
\label{mdp-func-inexact-adv}
In practice computing the $A^{\ppit}$ at every iteration is costly. In this section, we assume that we access an oracle such that at each iteration $t$, it gives us $\hat{A}^{\ppit}$ an approximation of $A^{\ppit}$.  

\begin{assumption}
\label{as:vaildapprox}
\textit{Valid Approximation}. For all iterations $t$ and $(s,a) \in \cS \times \cA$, $|\hat{A}^{\ppit}(s,a)| \leq \frac{1}{1-\gamma}$.  
\end{assumption}

\begin{assumption}
\label{as:approxerror}
\textit{Approximation Error}. For all iterations $t$ and $s \in \cS$, $\norm{A^{\ppit}(s,.)-\hat{A}^{\ppit}(s,.)}_{\infty}\leq \epsilon_{approx}$.  
\end{assumption}

Using this inexact advantage function, we define the following update and functions 
\begin{align*}
    &\bppitt(a|s) = \ppit(a|s)(1+\eta \hat{A}^\ppit(s,a))\tag{replacing $A^{\ppit}$ with $\hat{A}^{\ppit}$ }\\        
    & \tilde{\ell}_t(\theta) = \sum_{s} d^{\ppit}(s) \, \text{KL}(\ppith(\cdot|s) \, || \, \pi_{\theta}(\cdot|s))  \\
    & \ell_t(\theta) = \sum_{s \sim \tau} \text{KL} \left(h(f_{\tht}(s,\cdot))  (1 + \eta  \hat{A}^{\ppit}(s,\cdot))\, || \, h(f_{\theta}(s,\cdot))  \right)
\end{align*}

Since we use the inexact advantage, we cannot reuse the result of~\cref{lem:lin-improve}. So we provide a variant of that lemma with an inexact advantage. 

\begin{lemma}
    \label{lem:inex-lin-improv}
    Using the update $\ppitt(a | s) = \ppit(a | s) \, (1 + \eta \hapit(s, a)) $ with (i) a step-size $\eta < \min\left\{1 - \gamma, \frac{1}{C_t (1-\gamma)} \right\}$ and (ii) $\hapit$ satisfying assumptions~\ref{as:vaildapprox} and~\ref{as:approxerror}, at any iteration $t$ and $s \in S$ we have
      \begin{align*}
          V^{*}(s) - V^{\pi}(s) \leq \left[1 - \eta\,\,C_t(1-\gamma)\right]\left[V^{*}(s) - V^{\pi_{t}}(s)\right]+ \frac{\epsilon_{approx}}{1-\gamma}
      \end{align*}  
       where $C_t := \min_s \{ \ppit(\tilde{\cA}_t(s) | s)  \, \Delta^t(s) \}$, $\tilde{\cA}_t(s) := \argmax_a \qpit(s, a)$, $\ppit(\tilde{\cA}_t(s) | s)= \sum_{a \in \tilde{\cA}_t(s)} \ppit(a_t(s) | s)$ and $\Delta^t(s) := \max_a \qpit(s, a) - \max_{a\notin \tilde{\cA}} \qpit(s, a)$, and $V^*(s)$ is the value function corresponding to the optimal policy $\pi^*$ at $s \in \cS$.
\end{lemma}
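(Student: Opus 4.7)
The plan is to retrace the proof of~\cref{lem:lin-improve} and carefully track the extra terms created by replacing $A^{\pi_t}$ by $\hat{A}^{\pi_t}$ in the update. Substituting the update into the one-step lookahead gives
\begin{align*}
T^{\pi_{t+1}} V^{\pi_t}(s) - V^{\pi_t}(s) &= \sum_a \ppitt(a|s)\,A^{\pi_t}(s,a) \\
&= \eta \sum_a \ppit(a|s)\,\hat{A}^{\pi_t}(s,a)\,A^{\pi_t}(s,a),
\end{align*}
which I would split as the exact quadratic term $\eta \sum_a \ppit(a|s)[A^{\pi_t}(s,a)]^2$ plus the error $\eta \sum_a \ppit(a|s)(\hat{A}^{\pi_t}(s,a) - A^{\pi_t}(s,a))A^{\pi_t}(s,a)$. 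The exact part is lower-bounded by $\eta C_t\,(T V^{\pi_t}(s) - V^{\pi_t}(s))$ using~\cref{lem:avg-adv} together with the identity $T V^{\pi_t}(s) - V^{\pi_t}(s) = \max_a A^{\pi_t}(s,a)$, while the error is at most $\frac{\eta\,\epsilon_{approx}}{1-\gamma}$ in absolute value by combining~\cref{as:approxerror} with the usual bound $|A^{\pi_t}(s,a)| \lesssim \tfrac{1}{1-\gamma}$. This yields the inexact analogue of the key one-step inequality:
\begin{equation}
T^{\pi_{t+1}} V^{\pi_t}(s) - V^{\pi_t}(s) \ge \eta\,C_t\,(T V^{\pi_t}(s) - V^{\pi_t}(s)) - \frac{\eta\,\epsilon_{approx}}{1-\gamma}. \tag{$\star$}
\end{equation}

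The main obstacle is that, unlike the exact case, the right-hand side of $(\star)$ may be negative, so the step $V^{*}(s) - V^{\pi_{t+1}}(s) \le V^{*}(s) - T^{\pi_{t+1}} V^{\pi_t}(s)$ from~\cref{lem:lin-improve}, which relied on monotonic improvement $V^{\pi_{t+1}} \ge V^{\pi_t}$, is no longer automatic. To recover a substitute, I would use the closed form $V^{\pi_{t+1}} - V^{\pi_t} = (I - \gamma P^{\pi_{t+1}})^{-1}(T^{\pi_{t+1}} V^{\pi_t} - V^{\pi_t})$ that follows from affineness of $T^{\pi_{t+1}}$. Since $(I - \gamma P^{\pi_{t+1}})^{-1} = \sum_{k\ge 0} (\gamma P^{\pi_{t+1}})^k$ is a non-negative operator with $P^{\pi_{t+1}}\mathbf{1}=\mathbf{1}$, combining $(\star)$ with $T V^{\pi_t} \ge V^{\pi_t}$ gives the quasi-monotonicity estimate $V^{\pi_t}(s) - V^{\pi_{t+1}}(s) \le \frac{\eta\,\epsilon_{approx}}{(1-\gamma)^2}$ pointwise.

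With this in hand, I would decompose $V^{*}(s) - V^{\pi_{t+1}}(s) = [V^{*}(s) - T^{\pi_{t+1}} V^{\pi_t}(s)] + [T^{\pi_{t+1}} V^{\pi_t}(s) - V^{\pi_{t+1}}(s)]$. Affineness of $T^{\pi_{t+1}}$ rewrites the second bracket as $\gamma P^{\pi_{t+1}}(V^{\pi_t} - V^{\pi_{t+1}})(s)$, which the quasi-monotonicity estimate bounds by $\frac{\gamma\,\eta\,\epsilon_{approx}}{(1-\gamma)^2}$. For the first bracket I would reproduce the add-and-subtract and $\gamma$-contraction manipulations at the end of~\cref{lem:lin-improve}, now using $(\star)$ in place of its exact counterpart; this delivers $V^{*}(s) - T^{\pi_{t+1}} V^{\pi_t}(s) \le (1 - \eta\,C_t(1-\gamma))(V^{*}(s) - V^{\pi_t}(s)) + \frac{\eta\,\epsilon_{approx}}{1-\gamma}$. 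Summing the two contributions collects the approximation terms into $\frac{\eta\,\epsilon_{approx}}{1-\gamma} + \frac{\gamma\,\eta\,\epsilon_{approx}}{(1-\gamma)^2} = \frac{\eta\,\epsilon_{approx}}{(1-\gamma)^2}$, and the assumption $\eta < 1-\gamma$ then implies $\frac{\eta}{(1-\gamma)^2} \le \frac{1}{1-\gamma}$, yielding the claimed additive slack $\frac{\epsilon_{approx}}{1-\gamma}$. As in~\cref{lem:lin-improve}, the $T$-contraction step is really an $\ell_\infty$ inequality applied pointwise on the worst state, so the resulting bound is understood in that sense.
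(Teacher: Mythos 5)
Your proposal is correct and follows essentially the same route as the paper: the same split of $\sum_a \pi_t(a|s)\hat{A}^{\pi_t}(s,a)A^{\pi_t}(s,a)$ into the exact quadratic term plus an error term bounded via Assumptions~\ref{as:vaildapprox} and~\ref{as:approxerror}, the same quasi-monotonicity bound $V^{\pi_t}-V^{\pi_{t+1}}\le \eta\,\epsilon_{approx}/(1-\gamma)^2$, and the same decomposition through $T^{\pi_{t+1}}V^{\pi_t}$ with the $\gamma$-contraction step. The only cosmetic differences are that you derive the quasi-monotonicity from the resolvent identity $(I-\gamma P^{\pi_{t+1}})^{-1}$ rather than from the value-difference lemma (these are the same computation), and you defer the relaxation $\eta<1-\gamma$ to the very end, which collects the error terms into the identical final constant $\epsilon_{approx}/(1-\gamma)$.
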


\begin{proof}
First, we use the value difference Lemma for a state $s \in \cS $
    \begin{align*}
        V^{\pi}(s) - \jspit &= \frac{1}{1-\gamma} \E_{s' \sim \dpi} \left[ \sum_{a} \pi(a|s') \apit(s', a) \right] \\
        & = \frac{1}{1-\gamma} \E_{s' \sim \dpi} \left[ \sum_{a} \ppit(a | s') \, (1 + \eta \hapit(s', a))  \apit(s', a) \right] \\
        &= \frac{1}{1-\gamma} \E_{s' \sim \dpi} \left[ \sum_{a} \eta \ppit(a | s') \,  \hapit(s', a)  \apit(s', a) \right] \\
        & = \frac{1}{1-\gamma} \E_{s' \sim \dpi} \left[ \sum_{a} \eta \ppit(a | s') \,   (\hapit(s', a)-\apit(s',a) +\apit(s',a))  \apit(s', a) \right] \\
        & = \frac{1}{1-\gamma} \E_{s' \sim \dpi} \underbrace{\left[ \sum_{a} \eta \ppit(a | s') \, (\apit(s',a))^2  \right]}_{:=T_1} \\
        &+ \frac{1}{1-\gamma}\E_{s' \sim \dpi}  \underbrace{\left[ \sum_{a} \eta \ppit(a | s') \,  (\hapit(s', a)-\apit(s',a) )  \apit(s', a) \right]}_{:=T_2} 
    \end{align*} 
$T_1$ can be bounded using~\cref{lem:avg-adv}, 
\begin{align*}
    T_1 \geq \,\,\eta  C_t \max_{a} \apit(s, a) 
\end{align*}
To bound $T_2$, we use $\cref{as:approxerror}$, 
\begin{align*}
    T_2 &\geq - \eta  \left[ \sum_{a} \ppit(a | s') \,  |(\hapit(s', a)-\apit(s',a) )|  |\apit(s', a)| \right]\\
    & \geq - \eta  \left[ \sum_{a} \ppit(a | s') \,  |(\hapit(s', a)-\apit(s',a) )| \frac{1}{1-\gamma} \right]\tag{since $A^{\pi} \leq 1/(1-\gamma)$}\\
    & \geq - \eta  \left[ \sum_{a} \ppit(a | s') \,   \frac{\epsilon_{approx}}{1-\gamma} \right]\tag{using~\cref{as:approxerror}}\\
    & = - \frac{\eta\epsilon_{approx}}{1-\gamma}
\end{align*}
Using the lower-bound for $T_1$ and $T_2$ we have 
\begin{align}
    \sum_{a} \pi(a|s) \apit(s,a) \geq \eta  C_t \max_{a} \apit(s, a) - \frac{\eta\epsilon_{approx}}{1-\gamma}\label{eq:avgtt-adv}
\end{align}

Putting everything together we have, 
\begin{align}
    V^{\pi}(s) & \geq  \jspit - \frac{\eta\epsilon_{approx}}{(1-\gamma)^2}\\
               &  \geq \jspit - \frac{\epsilon_{approx}}{(1-\gamma)} \tag{since $\eta \leq 1- \gamma $}  \\
    \implies  \jspit-  V^{\pi}(s) &\leq   \frac{\epsilon_{approx}}{(1-\gamma)} \label{ineq:dif-lower}\\        
\end{align}
    Let $T$ be the Bellman optimality operator defined as:
    \begin{align*}
        (Tv)(s) = \max_a \{r(s, a) + \gamma \sum_{s'} \Pr[s'|s, a] v(s')\}
    \end{align*}
    Applying the operator at iteration $t$ we obtain:
    \begin{equation}
    \begin{aligned}
        TV^{\pi_{t}}(s) - V^{\pi_{t}}(s) = \max_a \qpit(s, a) - V^{\pi_{t}}(s) = \max_a \apit(s, a) \label{eq:bellman-diff2}
    \end{aligned}
    \end{equation}
    Let $\tpi$ be an operator w.r.t $\pi$ defined as:
    \begin{align*}
        \tpi(v) = \sum_a \pi(a|s)r(s, a) + \gamma \sum_a \pi(a|s)\sum_{s'}\Pr[s'|s, a] v(s')
    \end{align*}
    Applying $\tpi$ to $V^{\pi'}(s)$ results in:
    \begin{align*}
        \tpi V^{\pi'}(s) &=  \sum_a \pi(a|s)r(s, a) + \gamma \sum_a \pi(a|s)\sum_{s'}\Pr[s'|s, a] V^{\pi'}(s)\\   
                       &= \sum_a \pi(a|s) Q^{\pi'}(s, a)
    \end{align*}
    Using the above we obtain:
    \begin{align*}
        \tpi V^{\pi_{t}}(s) - V^{\pi_{t}}(s) &= \sum_a \ppi(a|s) \apit(s, a)\\
        &\ge \eta \,\, C_t \max_a \apit(s, a)- \frac{\eta\epsilon_{approx}}{1-\gamma} \tag{using ~\cref{eq:avgtt-adv}}\\
        &= \eta \,\, C_t \left[TV^{\pi_{t}}(s) - V^{\pi_{t}}(s)\right]- \frac{\eta\epsilon_{approx}}{1-\gamma} \tag{using Eq.~\ref{eq:bellman-diff2}}\\
        & \geq \eta \,\, C_t \left[TV^{\pi_{t}}(s) - V^{\pi_{t}}(s)\right]- \epsilon_{approx}\tag{since $\eta \leq 1-\gamma$}
    \end{align*}
Using~\cref{ineq:dif-lower} we have 
\begin{align}
       \tpi V^{\pi_{t}}(s) - \tpi V^{\pi}(s)  
        & = \gamma \sum_a \pi(a|s)\sum_{s'}\Pr[s'|s, a] (V^{\ppit}(s')-V^{\pi}(s'))\\
        & \leq \gamma \sum_a \pi(a|s)\sum_{s'}\Pr[s'|s, a] \frac{\epsilon_{approx}}{1-\gamma} \tag{using ~\cref{ineq:dif-lower}} \\ 
        & =  \frac{\gamma \epsilon_{approx}}{1-\gamma} \label{eq:bel-dif-up}
\end{align}

Assuming $\pi^*$ is the optimal policy we have:

 \begin{align*}
        V^{*}(s) - V^{\pi}(s) &= V^{*}(s) - \tpi V^{\pi}(s) \tag{since $\tpi V^{\pi}(s) = V^{\pi}(s)$}\\
                            &= V^{*}(s) - \tpi V^{\pi_{t}}(s)+ \tpi V^{\pi_{t}}(s)- \tpi V^{\pi}(s) \\
                            &\le V^{*}(s) - \tpi V^{\pi_{t}}(s)+ \frac{\gamma \epsilon_{approx}}{1-\gamma}\tag{using~\cref{eq:bel-dif-up}}\\
                            &= V^{*}(s) - V^{\pi_{t}}(s) - \left[\tpi V^{\pi_{t}}(s) - V^{\pi_{t}}(s) \right] + \frac{\gamma \epsilon_{approx}}{1-\gamma}\tag{add and subtract $V^{\pi_{t}}(s)$}\\
                            &\le V^{*}(s) - V^{\pi_{t}}(s) -\eta \,\, C_t \left[TV^{\pi_{t}}(s) - V^{\pi_{t}}(s)\right]+\epsilon_{approx}+ \frac{\gamma \epsilon_{approx}}{1-\gamma}\\
                            &= \eta\,\,C_t[V^{*}(s) - V^{\pi_{t}}(s)] + (1 - \eta\,\,C_t)[V^{*}(s) - V^{\pi_{t}}(s)] - \eta\,\,C_t \left[TV^{\pi_{t}}(s) - V^{\pi_{t}}(s)\right]+ \frac{\epsilon_{approx}}{1-\gamma}\\
                            &= \eta\,\,C_t \left[TV^{*}(s) - V^{\pi_{t}}(s) - T V^{\pi_{t}}(s) + V^{\pi_{t}}(s) \right] + (1-\eta\,\,C_t)\left[V^{*}(s) - V^{\pi_{t}}(s)\right]+ \frac{ \epsilon_{approx}}{1-\gamma}\\
                            &=\eta\,\,C_t \left[TV^{*}(s) - T V^{\pi_{t}}(s)\right] + (1-\eta\,\,C_t)\left[V^{*}(s) - V^{\pi_{t}}(s)\right]+ \frac{ \epsilon_{approx}}{1-\gamma}\\
                            &\le \gamma \,\,\eta\,\,C_t \left[V^{*}(s) - V^{\pi_{t}}(s)\right] + (1-\eta\,\,C_t)\left[V^{*}(s) - V^{\pi_{t}}(s)\right]+ \frac{ \epsilon_{approx}}{1-\gamma} \tag{$T$ is a $\gamma$ contraction map}\\
                            &= \left[1 - \eta\,\,C_t(1-\gamma)\right]\left[V^{*}(s) - V^{\pi_{t}}(s)\right]+ \frac{ \epsilon_{approx}}{1-\gamma}
    \end{align*}

\end{proof}

\begin{restatable}{theorem}{smdpofuncappinexadv}
Under~\cref{as:samplingerror}-\ref{as:approxerror},~\cref{alg:spma} with $\eta < \min\left\{1 - \gamma, \frac{1}{C_t (1-\gamma)} \right\}$ converges as,
\begin{align*}
& J(\pi^*) - J(\pi_T) \\ & \leq \left(\prod_{t=0}^{T-1} \alpha_t\right) (J(\pi^*) - J(\pi_0)) + \beta \sum_{t=0}^{T-1}\prod_{i=t+1}^{T-1}\alpha_t \,,
\end{align*}
where $\beta = \frac{\sqrt{2}}{(1-\gamma)^2\rho_{min}} \sqrt{\epsilon_{\text{stat}}  + \epsilon_{\text{bias}}}+\frac{\epsilon_{approx}}{1-\gamma}$, $\alpha_t = \left[1 - \eta\,\,C_t(1-\gamma)\right]$, $C_t := \min_s \{ \ppit(\tilde{\cA}_t(s) | s)  \, \Delta^t(s) \}$, $\tilde{\cA}_t(s) := \argmax_a \qpit(s, a)$, $\ppit(\tilde{\cA}_t(s) | s)= \sum_{a \in \tilde{\cA}_t(s)} \ppit(a_t(s) | s)$ and $\Delta^t(s) := \max_a \qpit(s, a) - \max_{a\notin \tilde{\cA}} \qpit(s, a)$.
\label{thm:spma-fa-inex-adv}
\end{restatable}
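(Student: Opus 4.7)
The plan is to mirror the proof of \cref{thm:spma-fa}, substituting the inexact-advantage variant of the tabular linear-improvement lemma wherever the exact version was used, and then propagating the approximation error term through the recursion. Concretely, $\bar{\pi}_{t+1/2}$ is now defined using $\hat{A}^{\pi_t}$ (not $A^{\pi_t}$), and $\tilde{\ell}_t, \ell_t$ are the corresponding KL surrogates. The two places where the exactness of the advantage matters are: (i) the tabular one-step contraction that gives a per-iteration improvement for the unprojected logits, and (ii) the translation of the projection error (KL residual of $\tilde{\ell}_t$) into a value-function gap. Only (i) changes materially.

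First, I would apply \cref{lem:inex-lin-improv} (instead of \cref{lem:lin-improve}) pointwise in $s$ to the unprojected iterate, yielding
\begin{align*}
V^{*}(s) - V^{\bar{\pi}_{t+1/2}}(s) \leq \alpha_t \bigl[V^{*}(s) - V^{\pi_t}(s)\bigr] + \tfrac{\epsilon_{\text{approx}}}{1-\gamma},
\end{align*}
with $\alpha_t = 1 - \eta C_t(1-\gamma)$. Adding and subtracting $V^{\pi_{t+1}}(s)$ and taking the expectation under $\rho$ gives
\begin{align*}
J(\pi^*) - J(\pi_{t+1}) \leq \alpha_t\bigl[J(\pi^*) - J(\pi_t)\bigr] + \underbrace{J(\pi_{t+1/2}) - J(\pi_{t+1})}_{E_1} + \tfrac{\epsilon_{\text{approx}}}{1-\gamma}.
\end{align*}

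Next, I would bound $E_1$ exactly as in the proof of \cref{thm:spma-fa}: apply the performance-difference lemma, Hölder's inequality, the uniform bound $\|Q^{\pi_{t+1}}\|_\infty \le 1/(1-\gamma)$, \cref{as:exploration} together with $d^{\pi_t} \geq (1-\gamma)\rho$, Pinsker's inequality, and finally Jensen's inequality, to arrive at
\begin{align*}
E_1 \leq \tfrac{\sqrt{2}}{(1-\gamma)^2 \rho_{\min}} \sqrt{\textstyle\sum_s d^{\pi_t}(s)\,\mathrm{KL}(\bar{\pi}_{t+1/2}(\cdot|s)\,\|\,\pi_{t+1}(\cdot|s))}.
\end{align*}
The argument under the square root is precisely $\tilde{\ell}_t(\theta_{t+1})$, which by \cref{as:samplingerror} and \cref{as:bias} is at most $\epsilon_{\text{stat}} + \epsilon_{\text{bias}}$. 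Crucially, this step does not depend on whether $\bar{\pi}_{t+1/2}$ was defined via $A^{\pi_t}$ or $\hat{A}^{\pi_t}$ — only on the KL divergence between $\bar{\pi}_{t+1/2}$ and $\pi_{t+1}$, which is the quantity minimized by the surrogate.

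Combining the two bounds gives the one-step recursion
\begin{align*}
J(\pi^*) - J(\pi_{t+1}) \leq \alpha_t \bigl[J(\pi^*) - J(\pi_t)\bigr] + \beta,
\end{align*}
with $\beta$ as defined in the theorem statement. Unrolling this recursion from $t = 0$ to $t = T - 1$ yields the claimed bound. The main obstacle is making sure that the approximation error $\epsilon_{\text{approx}}$ is pushed cleanly through the Bellman-operator manipulations in \cref{lem:inex-lin-improv}; in particular the step where one bounds $T^{\pi} V^{\pi_t} - T^{\pi} V^{\pi}$ using the already-established gap $V^{\pi_t}(s) - V^{\pi}(s) \le \epsilon_{\text{approx}}/(1-\gamma)$ is delicate, and the use of \cref{as:vaildapprox} to control $\sum_a \pi_t(a|s)(\hat{A}^{\pi_t}(s,a) - A^{\pi_t}(s,a))A^{\pi_t}(s,a)$ is where the boundedness of $\hat{A}^{\pi_t}$ is essential. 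Everything downstream is a straightforward adaptation of the exact-advantage proof.
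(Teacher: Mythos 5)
Your proposal follows essentially the same route as the paper's proof: invoke \cref{lem:inex-lin-improv} in place of \cref{lem:lin-improve} for the unprojected iterate, add and subtract $V^{\pi_{t+1}}(s)$, take expectation under $\rho$, bound the projection term $E_1$ via H\"older, Pinsker and Jensen exactly as in \cref{thm:spma-fa} (correctly observing that this step is insensitive to whether the half-step policy uses $A^{\pi_t}$ or $\hat{A}^{\pi_t}$), and unroll the recursion with $\beta = \beta' + \epsilon_{\text{approx}}/(1-\gamma)$. The argument is correct and matches the paper's.
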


\begin{proof}
    Using~\cref{lem:inex-lin-improv} with $\pi = \bppitt$, 
\begin{align*}
          V^{*}(s) - V^{\bppitt}(s) \leq \left[1 - \eta\,\,C_t(1-\gamma)\right]\left[V^{*}(s) - V^{\pi_{t}}(s)\right]+ \frac{\epsilon_{approx}}{1-\gamma}
      \end{align*}  
The rest of the proof is similar to the proof of~\cref{thm:spma-fa}. For completeness, we repeat it here. By adding and removing $V^{\pi_{t+1}}(s)$ to both sides and rearranging we have 
\begin{align*}
    V^{*}(s) - V^{\pi_{t+1}}(s) & \leq  \left[1 - \eta\,\,C_t(1-\gamma)\right]\left[V^{*}(s) - V^{\pi_{t}}(s)\right] + V^{\bppitt}(s) - V^{\pi_{t+1}}(s)+ \frac{\epsilon_{approx}}{1-\gamma}.\\
\end{align*}
Taking the expectation w.r.t. $\rho$ we obtain: 
\begin{align*}
    J(\pi^*) - J(\pi_{t+1}) & \leq  \left[1 - \eta\,\,C_t(1-\gamma)\right]\left[J(\pi^*) - J(\pi_{t})\right] + \underbrace{J(\pi_{t+1/2}) - J(\pi_{t+1})}_{:=E_1}+ \frac{\epsilon_{approx}}{1-\gamma}.\\
\end{align*}
The term $E_1$ can be bounded as follows:  
\begin{align*}
    E_1 &=\sum_s \dbpitt(s) \inner{\qpitt(s,.)}{\bppitt(.|s)-\ppitt(.|s)}\\
    & \leq \sum_s \dbpitt(s) \norm{\qpitt(s,.)}_{\infty}\norm{\bppitt(.|s)-\ppitt(.|s)}_{1}\tag{Holder inequality}\\
    & \leq \frac{1}{1-\gamma}\sum_s \dbpitt(s)\norm{\bppitt(.|s)-\ppitt(.|s)}_{1} \tag{since $\norm{\qpitt(s,.)}_{\infty} \leq \frac{1}{1-\gamma}$}\\
    & =  \frac{1}{1-\gamma}\sum_s \frac{\dbpitt(s)}{\rho(s)}\rho(s) \norm{\bppitt(.|s)-\ppitt(.|s)}_{1}\\ 
    & \leq \frac{1}{1-\gamma} \norm{\frac{\dbpitt}{\rho}}_{\infty} \sum_s \rho(s) \norm{\bppitt(.|s)-\ppitt(.|s)}_{1}\\ 
   & \leq \frac{1}{(1-\gamma)\rho_{min}} \sum_s \rho(s) \norm{\bppitt(.|s)-\ppitt(.|s)}_{1}\tag{since $\dbpitt(s) \leq 1$ and using~\cref{as:exploration}}\\
   & \leq \frac{1}{(1-\gamma)^2\rho_{min}} \sum_s \dpit(s) \norm{\bppitt(.|s)-\ppitt(.|s)}_{1} \tag{sicne $\dpit \geq (1-\gamma) \rho$}\\
   & \leq \frac{\sqrt{2}}{(1-\gamma)^2\rho_{min}} \sum_s \dpit(s) \sqrt{ \text{KL}(\bppitt(.|s)\, || \, \ppitt(.|s))} \tag{using strong convexity of $ \text{KL}$ divergence or Pinsker's inequality}\\
   & \leq \frac{\sqrt{2}}{(1-\gamma)^2\rho_{min}} \sqrt{\underbrace{\sum_s \dpit(s)  \text{KL}(\bppitt(.|s)\, || \, \ppitt(.|s))}_{:=E_2}} \tag{due to concavity of $\sqrt{ }$}
\end{align*}
where $E_2$ can be bounded as follows: 
\begin{align*}
    E_2 &= \tilde{\ell}_t(\theta_{t+1}) \\
        &= \tilde{\ell}_t(\theta_{t+1}) - \min_{\theta}\tilde{\ell}_t(\theta_{t+1})+\min_{\theta}\tilde{\ell}_t(\theta_{t+1})\\
        & \leq \epsilon_{stat} +\min_{\theta}\tilde{\ell}_t(\theta_{t+1})\tag{using~\cref{as:samplingerror}}\\
        & \leq \epsilon_{stat} + \epsilon_{bias}\tag{using~\cref{as:bias}}
\end{align*}

Putting everything together we have: 
\begin{align*}
    E_1 \leq \underbrace{\frac{\sqrt{2}}{(1-\gamma)^2\rho_{min}} \sqrt{\epsilon_{\text{stat}}+ \epsilon_{\text{bias}}}}_{:=\beta'}
\end{align*}
Therefore we have 
\begin{align*}
    J(\pi^*) - J(\pi_{t+1}) & \leq  \underbrace{\left[1 - \eta\,\,C_t(1-\gamma)\right]}_{\alpha_t}\left[J(\pi^*) - J(\pi_{t})\right] + \underbrace{\beta'+\frac{\epsilon_{approx}}{1-\gamma}}_{:=\beta}.
\end{align*}
Unrolling the above recursion for $T$ iterations, 
\begin{align*}
    J(\pi^*) - J(\pi_{T}) \leq \left(\prod_{t=0}^{T-1} \alpha_t\right) (J(\pi^*) - J(\pi_0)) + \beta \sum_{t=0}^{T-1}\prod_{i=t+1}^{T-1}\alpha_i 
\end{align*}

\end{proof}


\clearpage
\vspace{-2ex}
\section{Tabular MDP Experiments}
\label{app:tabular-mdp-experiments}
In this section, we empirically evaluate $\Alg$ on tabular MDP environments. For these experiments, we use Cliff World~\citep{sutton2018reinforcement} and Frozen Lake~\citep{brockman2016openai} following the setup in~\cite{vaswani2024decision}. In subsection,~\ref{tabular-mdp-exp-softmax-param} we examine the case where the policy is parametrized using softmax tabular representation. In subsection,~\ref{tabular-mdp-exp-linear-param} we investigate the function approximation setting as described in Section~\ref{sec:fa}, where the policy is parametrized using a linear model.

\subsection{Softmax Tabular Representation}
\label{tabular-mdp-exp-softmax-param}
For this parametrization we initialize $z \in \mathbb{R}^{S \times A}$ uniformly , i.e., $\pi_0(a|s) = \frac{1}{|\mathcal{A}|}$ for each $a$ and $s$. Furthermore, for each algorithm, we set $\eta$ using a grid search and pick the step-sizes that result in the best area under the curve (AUC). The tabular MDP results suggest $\Alg$ and $\NPG$ achieve similar performance and they both outperform $\SPG$~\citep{sutton1999policy, schulman2017proximal} (see Fig.~\ref{fig:mdp-tabular}). To analyze the sensitivity of each algorithm to the choice of $\eta$, we examine each optimizer across different values of $\eta$. The results in Fig.~\ref{fig:mdp-tabular-sensitivity} suggest that overall $\SPG$ (in green) is more sensitive to different values of $\eta$ compared to $\Alg$ (blue) and $\NPG$ (red).

\begin{figure}[ht]
\centering
\includegraphics[width=\textwidth]{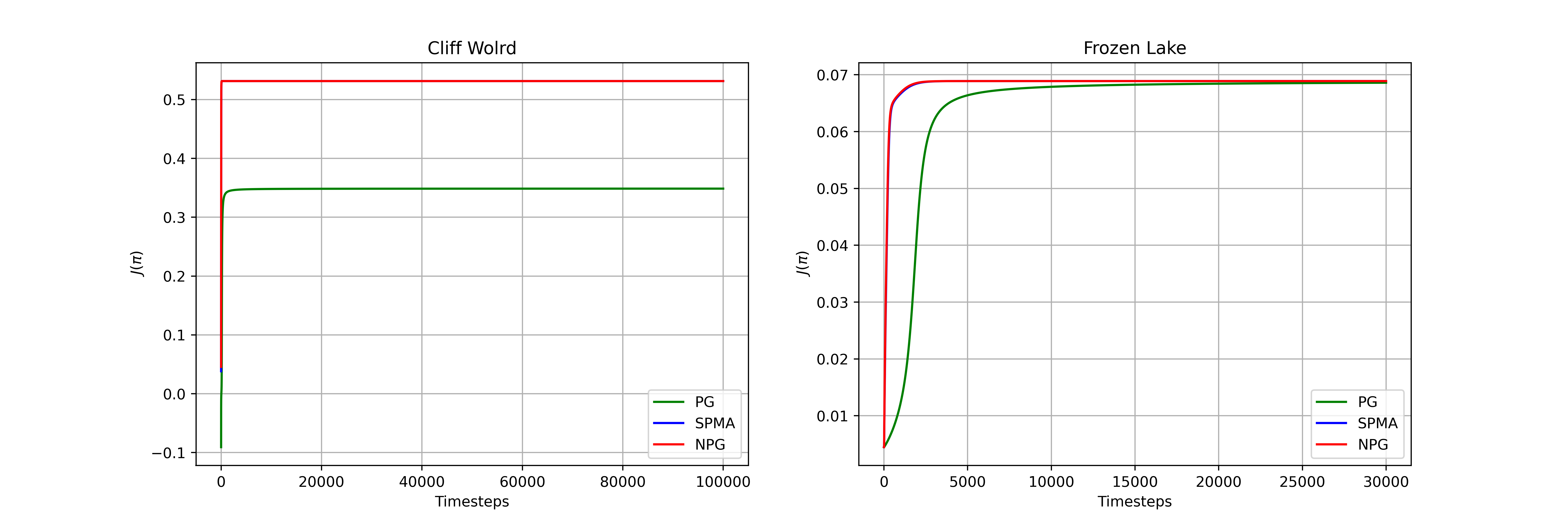}
\caption{$\Alg$ matches the performance of $\NPG$ and they both outperform $\SPG$.}
\label{fig:mdp-tabular}
\end{figure}

\begin{figure}[ht]
\centering
\includegraphics[width=\textwidth]{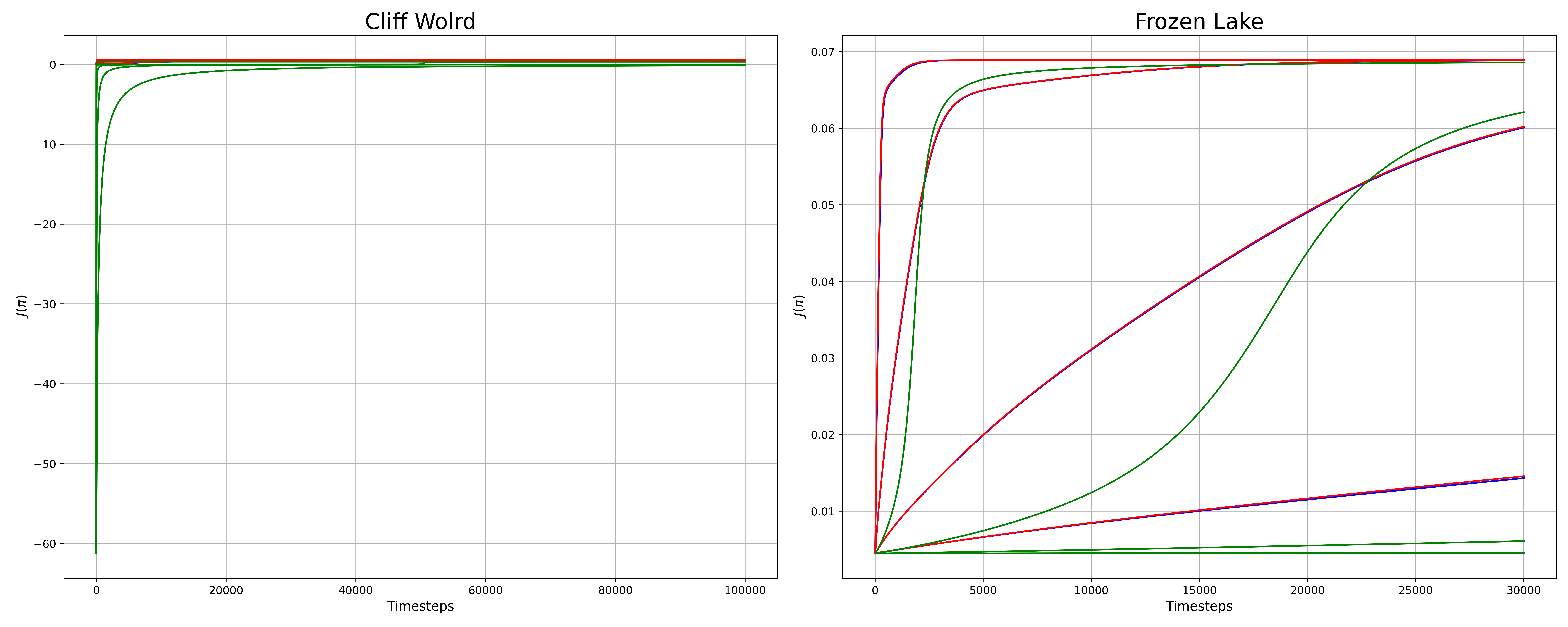}
\caption{$\SPG$ (green) is more sensitive to $\eta$ compared to $\Alg$ and $\NPG$ (blue and red).}
\label{fig:mdp-tabular-sensitivity}
\end{figure}

\clearpage
\subsection{Linear Functional Approximation (Linear FA)}
\label{tabular-mdp-exp-linear-param}
For the Linear FA setting, we use a log-linear policy parametrization: $\pi_t(a | s) = \frac{\exp(\X(s, a)\theta)}{\sum_{a'}\exp(\X(s, a')\theta)}$, with $\X \in \mathbb{R}^{\cS \cA \times d}$ and $\theta \in \mathbb{R}^d$ representing the features and parameters. We use constant initialization for $\theta$ and following~\citet{vaswani2024decision}, use tile-coded features for $\X$. As in the previous section, we set $\eta$ for $\Alg$ and $\MDPO$ via grid search and report results based on the best AUC. For the inner loop optimization (e.g., minimizing~\cref{eq:approx-surrogate}), we use Armijo line search~\citep{armijo1966minimization}, avoiding an additional grid search for the step-size. For $\SPG$ we use the update from~\citet{mei2020global} where Armijo line search is used to set $\eta$. 

We make the following observations from the results: (i) $\SPG$ performs poorly in the linear FA setting, while both $\Alg$ and $\MDPO$ perform well when the parameter dimension $d$ and the number of inner loop optimizations $m$ are sufficiently large. (ii) In the CW environment, for smaller $d$, $\Alg$ converges faster than $\MDPO$ (Fig.~\ref{fig:cw-m-25-50-lfa}, top row). Increasing $m$ from 25 to 50 narrows the gap between $\Alg$ and $\MDPO$ (top vs. bottom row). (iii) In the FL environment, $\Alg$ and $\MDPO$ perform similarly, both outperforming $\SPG$ (Fig.~\ref{fig:fl-m-25-50-lfa}).

\begin{figure}[!ht]
\centering
\begin{minipage}[b]{\textwidth}
    \centering
    \includegraphics[width=0.84\textwidth]{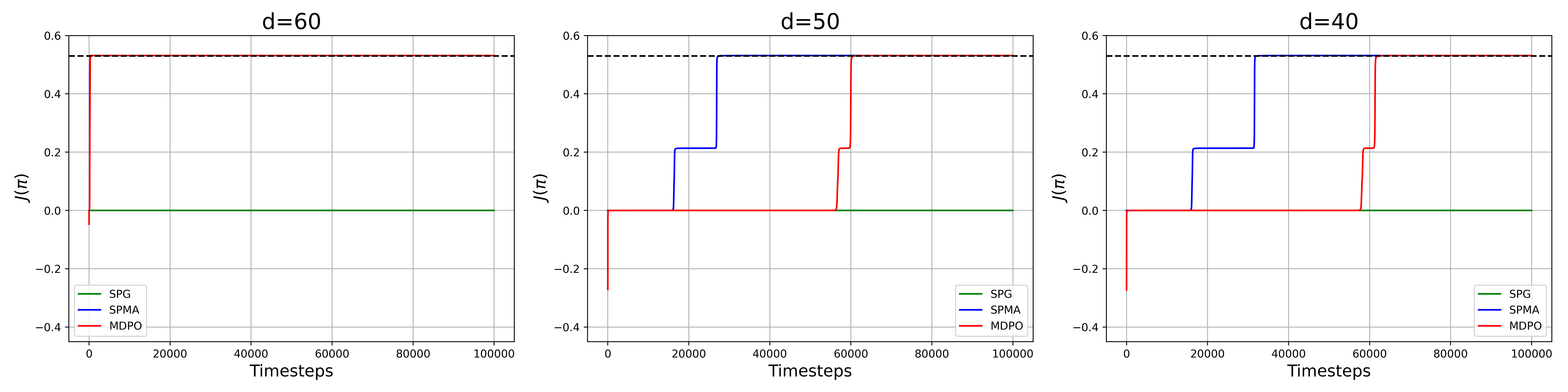}
\end{minipage}
\begin{minipage}[b]{\textwidth}
    \centering
    \includegraphics[width=0.84\textwidth]{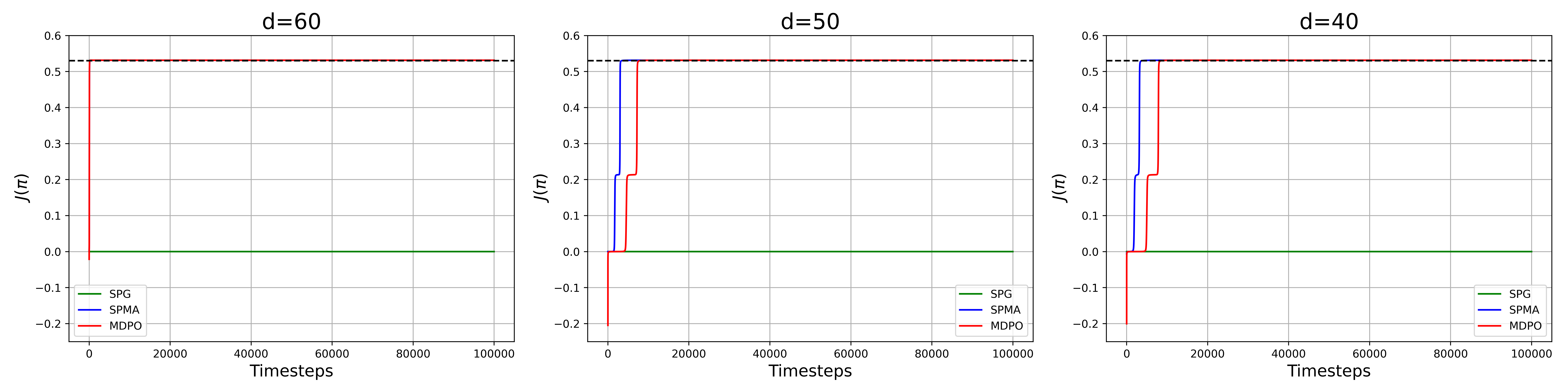}
\end{minipage}
\caption{CW environment: The top row ($m = 25$) shows that $\Alg$ converges faster than $\MDPO$ as $d$ decreases, while the bottom row ($m = 50$) shows the gap decreases as the number of inner loop optimizations increases.}
\label{fig:cw-m-25-50-lfa}
\end{figure}
\begin{figure}[!ht]
\centering
\begin{minipage}[b]{\textwidth}
    \centering
    \includegraphics[width=0.84\textwidth]{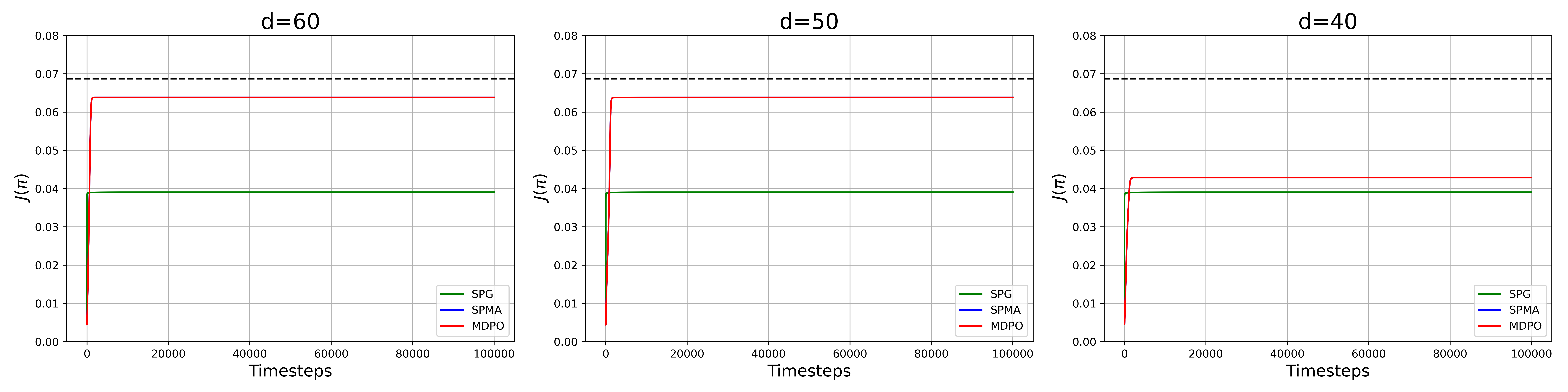}
\end{minipage}
\begin{minipage}[b]{\textwidth}
    \centering
    \includegraphics[width=0.84\textwidth]{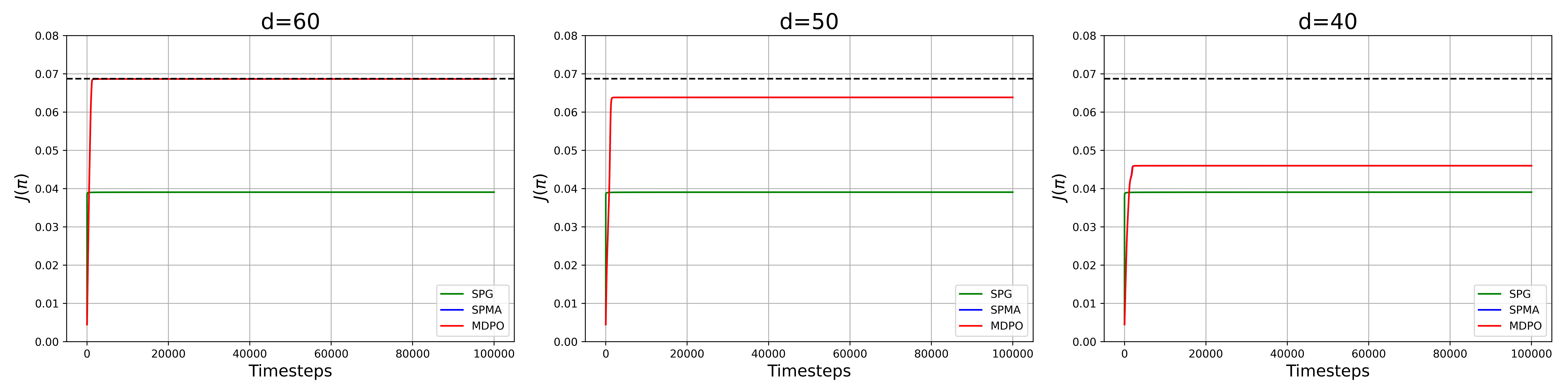}
\end{minipage}
\caption{FL environment: The top row ($m = 25$) and bottom row ($m = 50$) show that $\Alg$ and $\MDPO$ have similar convergence and both outperform $\SPG$. The performance of both $\Alg$ and $\MDPO$ improves as $d$ increases (i.e., the bias decreases) and $m$ increases (i.e., the optimization error decreases).}
\label{fig:fl-m-25-50-lfa}
\end{figure}






\clearpage
\subsection{\texorpdfstring{Empirical Verification that $C_t$ is Non-zero}{Empirical Verification that C\_t is Bounded Away from Zero}}
\label{app:tabular-mdp-exp-ct}
 In this section, we empirically investigate the value of $C_t$. Recall that $C_t := \min_s \{ \ppit(\tilde{\cA}_t(s) | s)  \, \Delta^t(s) \}$, $\tilde{\cA}_t(s) := \argmax_a \qpit(s, a)$, $\ppit(\tilde{\cA}_t(s) | s) = \sum_{a \in \tilde{\cA}_t(s)} \ppit(a | s)$, and $\Delta^t(s) := \max_a \qpit(s, a) - \max_{a \notin \tilde{\cA}_t(s)} \qpit(s, a)$. To this end, we compute the per-state metric $C_t(s) := \ppit(\tilde{\cA}_t(s) | s)  \, \Delta^t(s)$ and its components, $\ppit(\tilde{\cA}_t(s) | s)$ and $\Delta^t(s)$, for the CliffWorld and FrozenLake environments using softmax tabular policy parametrization. Our results indicate that $C_t(s)$ is nonzero for all states except for certain terminal states, specifically the goal state and the hole states in FrozenLake, where $\qpi(s, a)$ is zero for all actions $a$ and policies $\pi$. In these cases, we observe that $\Delta^t(s)$ becomes zero, leading to $C_t(s) = 0$ and consequently $C_t = 0$. Note that in the CliffWorld environment, the transition probability matrix moves the agent back to the starting point at terminal states, so they do not pose an issue. However, in the FrozenLake environment, the agent remains at the terminal state with probability one.

Given these empirical results, one could modify the theorems in this paper to exclude the problematic terminal states. For example, in~\cref{thm:smdpo-constant-mdp}, instead of bounding $V^{\pi^{*}}(s) - V^{\pi_{T}}(s)$ for all states using the $\ell_{\infty}$ norm, we could exclude states where $Q^{\pi^{*}}(s, a) = 0$ for all actions $a$. This exclusion is analogous to the bandit literature, which considers only arms with a nonzero gap in the regret bound. In~\cref{fig:CW-FL-ct-tab}, we plot $C_t$ using softmax tabular policy parameterizations for both CliffWorld and FrozenLake environments, confirming that after excluding terminal states, $C_t$ is lower-bounded by a positive constant.  

\begin{figure}[ht]
\centering
\includegraphics[width=\textwidth]{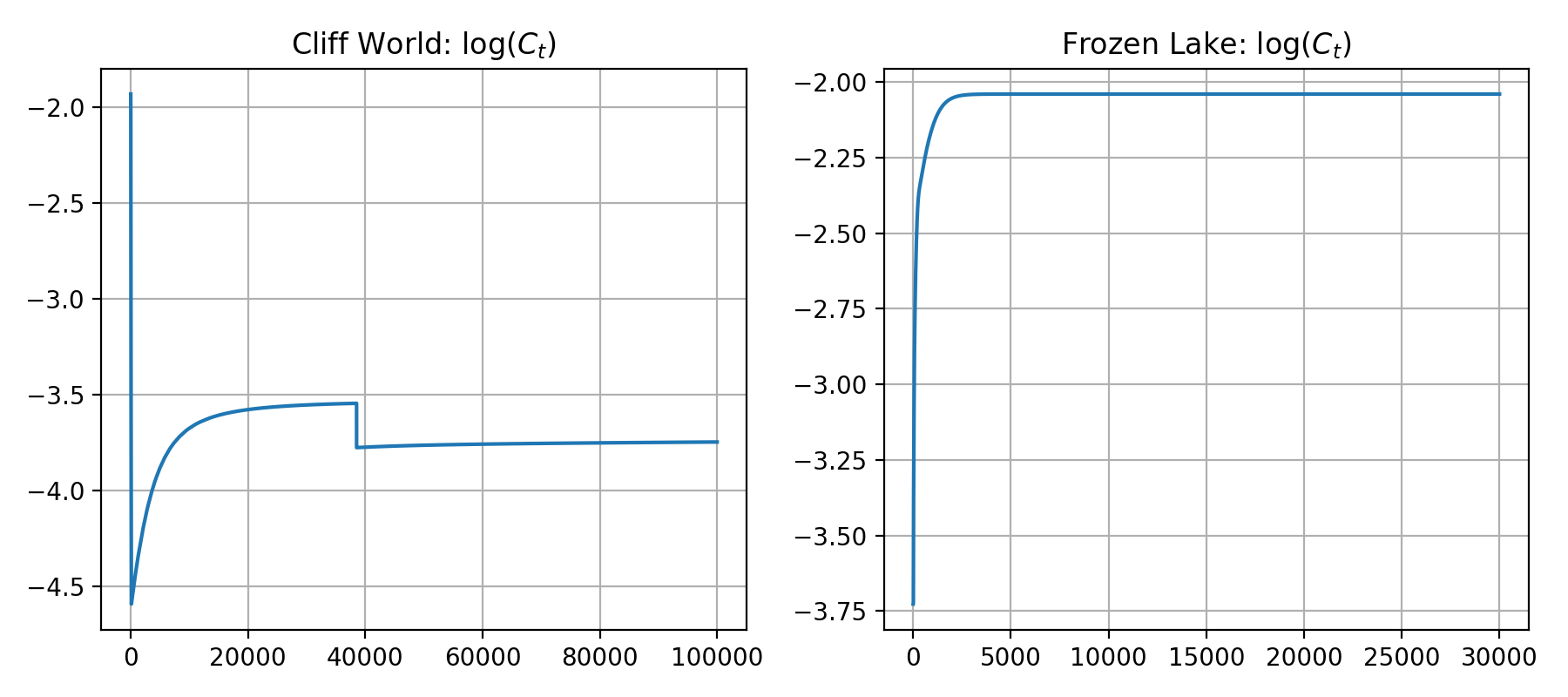}
\caption{$\Alg$ with softmax tabular policy parametrization: After excluding terminal states, results confirm that $C_t$ is lower-bounded by a positive constant.}
\label{fig:CW-FL-ct-tab}
\end{figure}

\vspace{-2ex}
\clearpage
\section{Additional Details for Stable Baselines Experiments}
\vspace{-2ex}
\label{app:stable-baseline-experiment-details}
In subsection~\ref{subsec:atari-mujoco-details}, we provide additional details on the hyper-parameters used for the results in~\cref{sec:experiments}. Next, we present an ablation study on the number of inner loop optimization steps ($m$) in subsection~\ref{subsec:ablation-m}.
\vspace{-2ex}
\subsection{Atari and Mujoco Details}
\vspace{-2ex}
\label{subsec:atari-mujoco-details}
In the Atari experiments, we use the default hyper-parameters for each method from stable baselines~\citep{stable-baselines3}. This choice is motivated by two factors: i) following the work of~\citet{tomar2020mirror}, we aim to evaluate the effectiveness of different surrogate losses without conducting an exhaustive search over numerous hyper-parameters; ii) the CNN-based actor and critic networks make grid searching over many hyper-parameters (e.g., framestack, GAE $\lambda$, horizon length, discount factor) computationally infeasible. For a complete list of hyper-parameters used in the Atari experiments, see~\cref{tab:atari-hyperparams}.

In the MuJoCo experiments, we use the default hyper-parameters from stable baselines for each method, but perform a grid search on the Adam inner loop step size for $\PPO$ and $\TRPOC$ (best among $[3 \times 10^{-3}, 3 \times 10^{-4}, 3 \times 10^{-5}]$) and the probability ratio clipping parameter in $\PPO$ (best from $[0.1, 0.2, 0.3]$). For the regularized surrogates (i.e., the remaining methods: $\Alg$, $\MDPO$, and $\TRPOR$), we avoid a grid search for the inner loop step size by using a full batch (i.e., the horizon length) along with the Armijo line search~\citep{armijo1966minimization}. See~\cref{tab:mujoco-hyperparams} for the full list of hyper-parameters used in the MuJoCo experiments.

To set $\eta$ for the regularized surrogates, we perform a grid search over five fixed values ($[0.3, 0.5, 0.7, 0.9, 1.0]$). Although~\citet{tomar2020mirror} anneals $\eta$ from 1 to 0 during training, we observe that using a constant step size results in better performance. Our grid search strategy for all stable baselines experiments is consistent: we run the experiments for 2 million iterations, select the hyper-parameters that yield the best AUC, and then use these hyper-parameters for an additional 8 million iterations.

\begin{table}[ht]
\centering
\resizebox{0.80\textwidth}{!}{
\begin{tabular}{|l|c|c|c|c|c|c|c|}
\hline
\textbf{Hyperparameter} & \textbf{SPMA} & \textbf{MDPO} & \textbf{TRPO\_regularized} & \textbf{TRPO\_constrained} & \textbf{PPO} \\ 
\hline
Reward normalization & \xmark & \xmark & \xmark & \xmark & \xmark \\
Observation normalization & \xmark & \xmark & \xmark & \xmark & \xmark \\
Orthogonal weight initialization & \cmark & \cmark & \cmark & \cmark & \cmark \\
Value function clipping & \xmark & \xmark & \xmark & \xmark & \xmark \\
Gradient clipping & \xmark & \xmark & \xmark & \xmark & \cmark \\
Probability ratio clipping & \xmark & \xmark & \xmark & \xmark & \cmark \\
\hline
Adam step-size & \multicolumn{6}{c|}{$3\times10^{-4}$} \\
Minibatch size & \multicolumn{6}{c|}{256} \\
Framestack & \multicolumn{6}{c|}{4} \\
Number of environment copies & \multicolumn{6}{c|}{8} \\
GAE $\lambda$ & \multicolumn{6}{c|}{0.95} \\
Horizon (T) & \multicolumn{6}{c|}{128} \\
Number of inner loop updates (m) & \multicolumn{6}{c|}{5} \\
Entropy coefficient & \multicolumn{6}{c|}{0} \\
Discount factor & \multicolumn{6}{c|}{0.99} \\
Total number of timesteps & \multicolumn{6}{c|}{$10^7$} \\
Number of runs for plot averages & \multicolumn{6}{c|}{5} \\
Confidence interval for plot runs & \multicolumn{6}{c|}{$\sim 95\%$} \\
\hline
\end{tabular}
}
\caption{Hyper-parameters for Atari experiments.}
\label{tab:atari-hyperparams}
\end{table}


\begin{table}[ht]
\centering
\resizebox{0.80\textwidth}{!}{
\begin{tabular}{|l|c|c|c|c|c|c|c}
\hline
\textbf{Hyperparameter} & \textbf{SPMA} & \textbf{MDPO} & \textbf{TRPO\_regularized} & \textbf{TRPO\_constrained} & \textbf{PPO} \\ 
\hline
Minibatch size & 2048 & 2048 & 2048 & 64 & 64 \\
Reward normalization & \xmark & \xmark & \xmark & \xmark & \xmark \\
Observation normalization & \xmark & \xmark & \xmark & \xmark & \xmark \\
Orthogonal weight initialization & \cmark & \cmark & \cmark & \cmark & \cmark \\
Value function clipping & \xmark & \xmark & \xmark & \xmark & \xmark \\
Gradient clipping & \xmark & \xmark & \xmark & \xmark & \cmark \\
Probability ratio clipping & \xmark & \xmark & \xmark & \xmark & \cmark \\
Adam step-size & \xmark & \xmark & \xmark & \cmark & \cmark \\
\hline
GAE $\lambda$ & \multicolumn{6}{c|}{0.95} \\
Horizon (T) & \multicolumn{6}{c|}{2048} \\
Number of inner loop updates (m) & \multicolumn{6}{c|}{5} \\
Entropy coefficient & \multicolumn{6}{c|}{0} \\
Discount factor & \multicolumn{6}{c|}{0.99} \\
Total number of timesteps & \multicolumn{6}{c|}{$10^7$} \\
Number of runs for plot averages & \multicolumn{6}{c|}{5} \\
Confidence interval for plot runs & \multicolumn{6}{c|}{$\sim 95\%$} \\
\hline
\end{tabular}
}
\caption{Hyper-parameters for MuJoCo experiments.}
\label{tab:mujoco-hyperparams}
\end{table}

\subsection{Ablation Study on the Number of Inner Loop Optimization Steps}
\label{subsec:ablation-m}
In this subsection, we investigate the effect of varying the number of inner loop optimization steps ($m$) in the stable baselines experiments. Consistent with~\citet{tomar2020mirror}, we observe that using $m=1$ results in poor performance, so we focus on larger values of $m$. In the MuJoCo experiments, increasing $m$ from 5 to 10 and 15 consistently improves the performance of $\Alg$ (see~\cref{fig:mujoco-m-ablation}). Specifically, for larger $m$, $\Alg$ becomes comparable to $\TRPOC$ on Hopper and Ant, while outperforming it on HalfCheetah (see~\cref{fig:mujoco-m-vertical}). 

For the Atari experiments, we observe that increasing $m$ does not necessarily improve the results across methods (see~\cref{fig:atari-m-ablation}). We conjecture that this is a side-effect of using a constant tuned step-size (for $m = 5$) in the inner-loop. In the future, we plan to run the full grid-search for the inner-loop step-size for each value of $m$. Alternatively, we plan to investigate an adaptive way of setting the inner-loop step-size. 

\begin{figure}[ht]
\centering
\begin{minipage}[b]{\textwidth}
    \centering
    \includegraphics[width=0.73\textwidth]{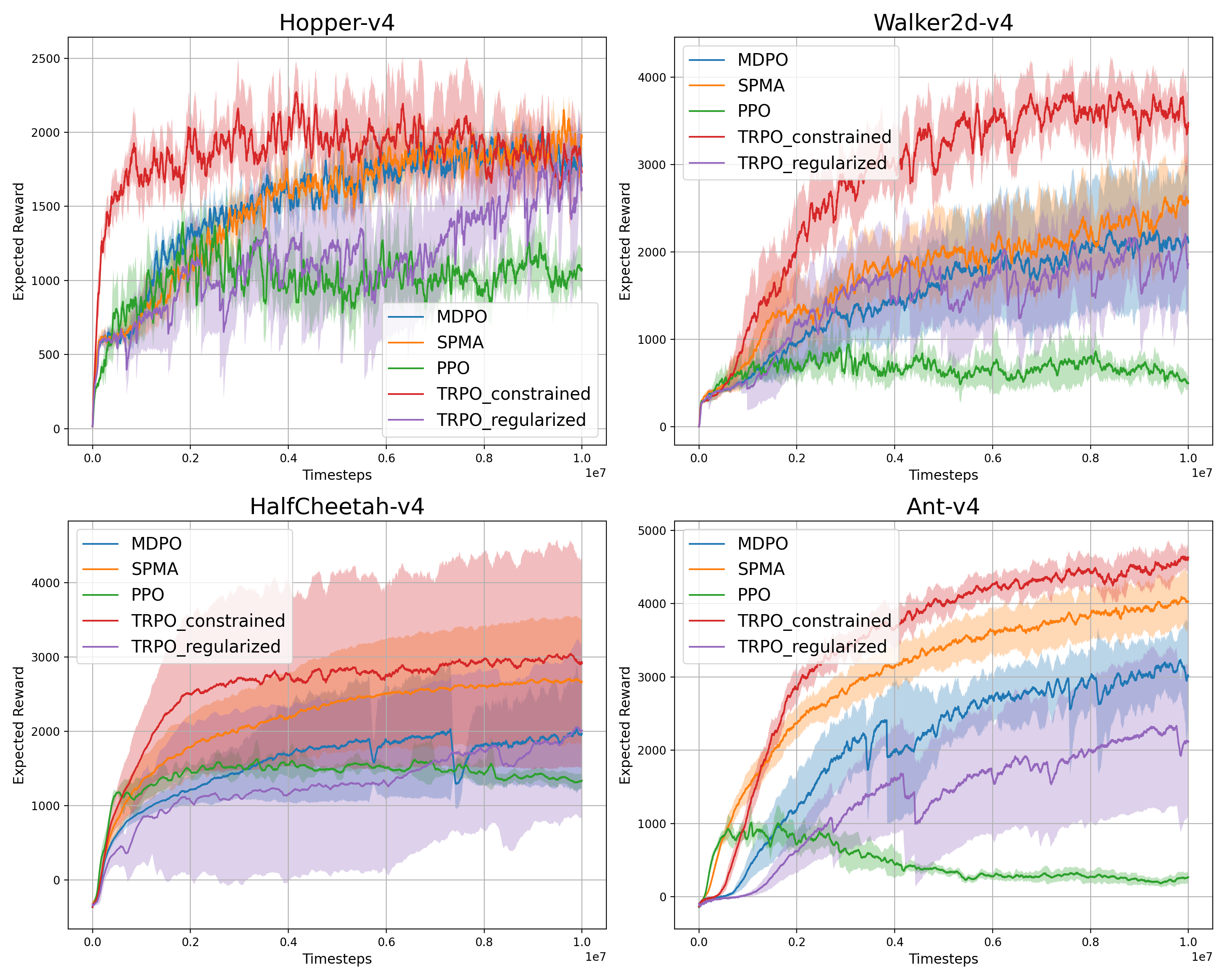}
    \caption*{(a)}
\end{minipage}

\vspace{0.65cm}

\begin{minipage}[b]{\textwidth}
    \centering
    \includegraphics[width=0.73\textwidth]{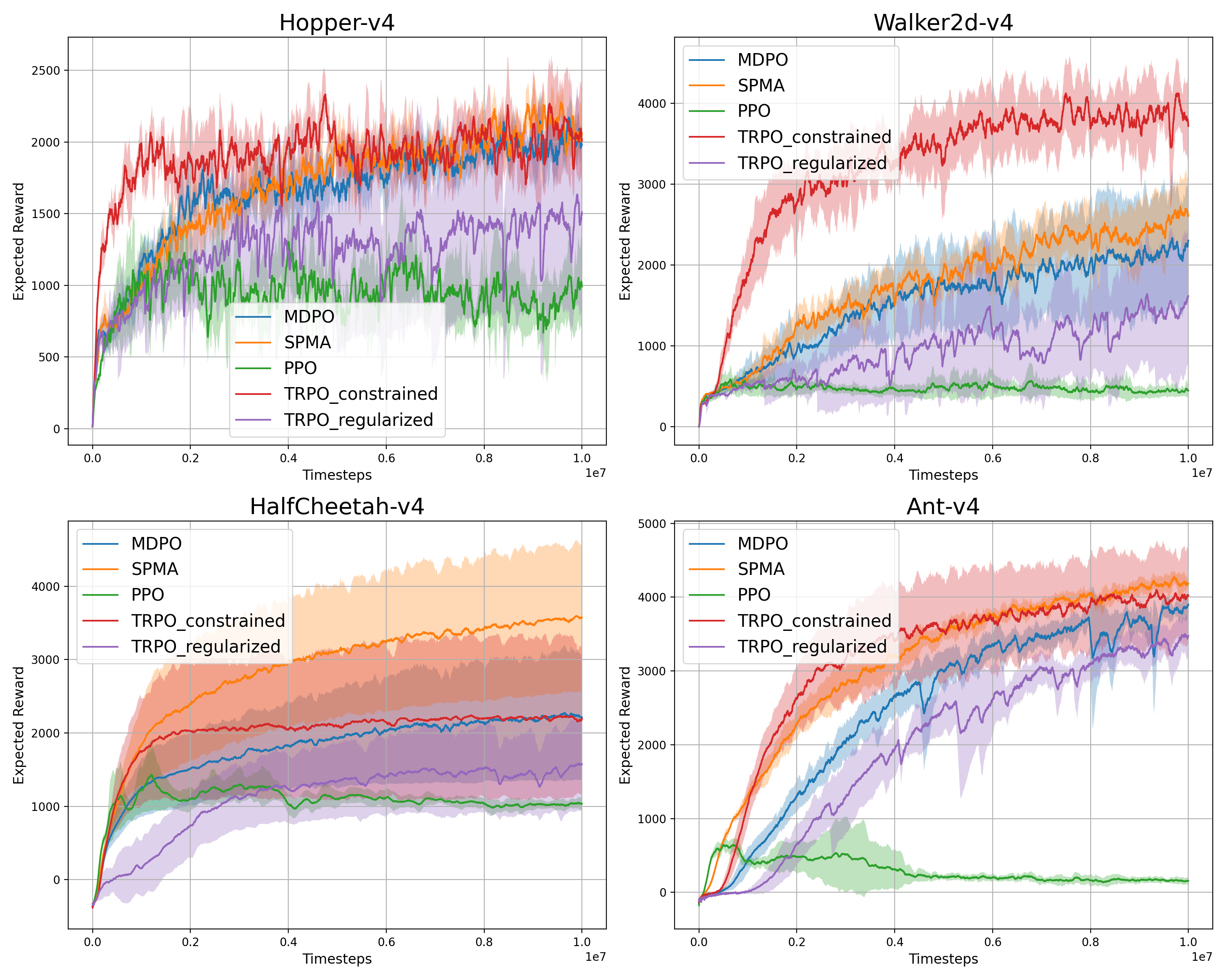}
    \caption*{(b)}
\end{minipage}

\caption{MuJoCo results for $m=10$ (a) and $m=15$ (b). As $m$ increases from 5 (~\cref{fig:mdp-mujoco}) to 10 and 15, $\Alg$ shows performance comparable to the fine-tuned $\TRPOC$.}
\label{fig:mujoco-m-vertical}
\end{figure}

\begin{figure}[ht]
\centering
\begin{minipage}[b]{\textwidth}
    \centering
    \includegraphics[width=0.73\textwidth]{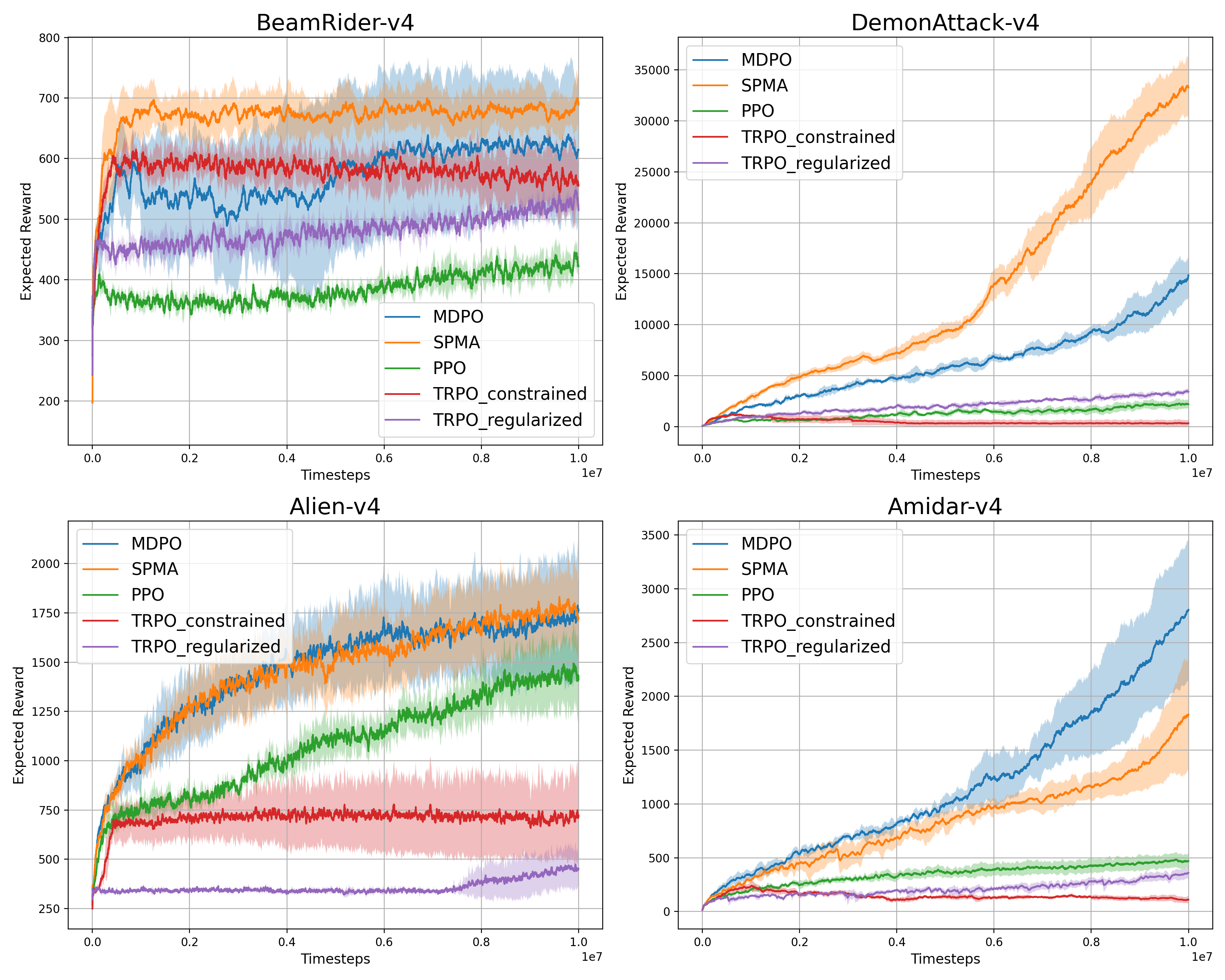}
    \caption*{(a)}
\end{minipage}

\vspace{0.65cm}

\begin{minipage}[b]{\textwidth}
    \centering
    \includegraphics[width=0.73\textwidth]{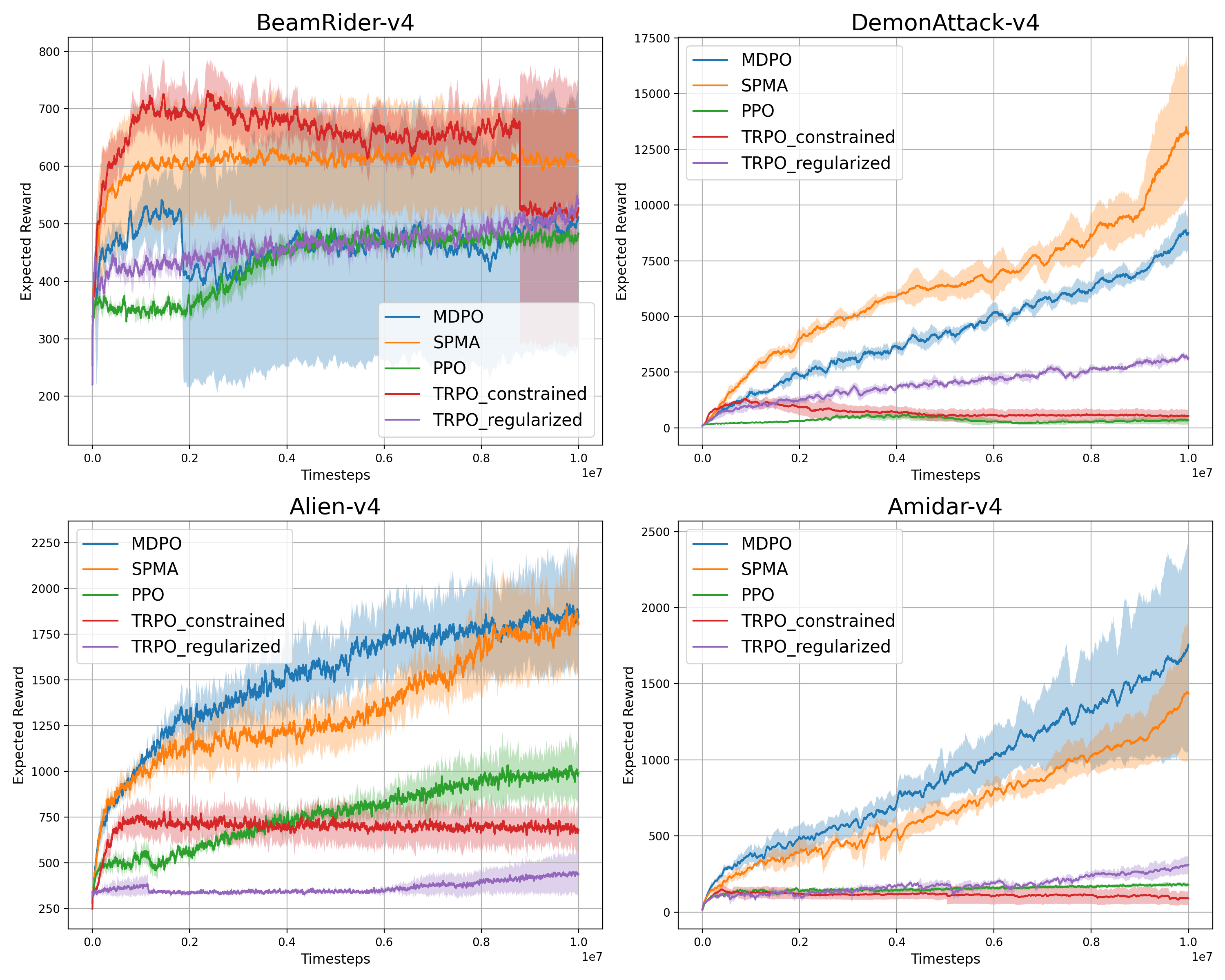}
    \caption*{(b)}
\end{minipage}

\caption{Atari results for $m=10$ (top) and $m=15$ (bottom). Increasing $m$ does not necessarily lead to performance improvements.}
\label{fig:atari-m-vertical}
\end{figure}
\begin{figure}[ht]
\centering
\includegraphics[width=\textwidth]{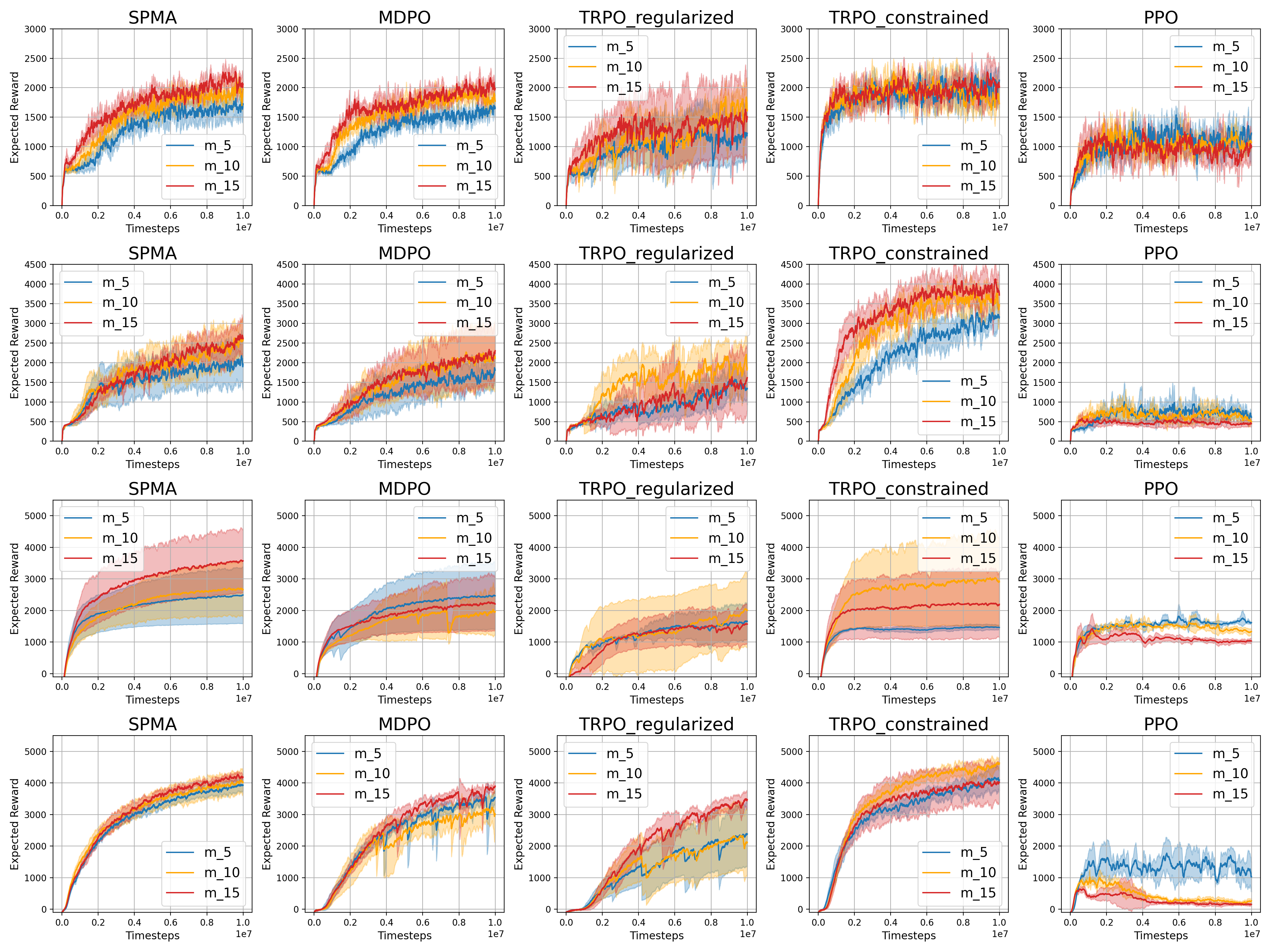}
\caption{MuJoCo ablation on $m$: The rows correspond to the Hopper-v4, Walker2d-v4, HalfCheetah-v4, and Ant-v4 environments, respectively. As the number of inner loop optimization steps $m$ increases, $\Alg$ shows improvements in expected reward and becomes comparable to the fine-tuned $\TRPOC$.}
\label{fig:mujoco-m-ablation}
\end{figure}

\begin{figure}[ht]
\centering
\includegraphics[width=\textwidth]{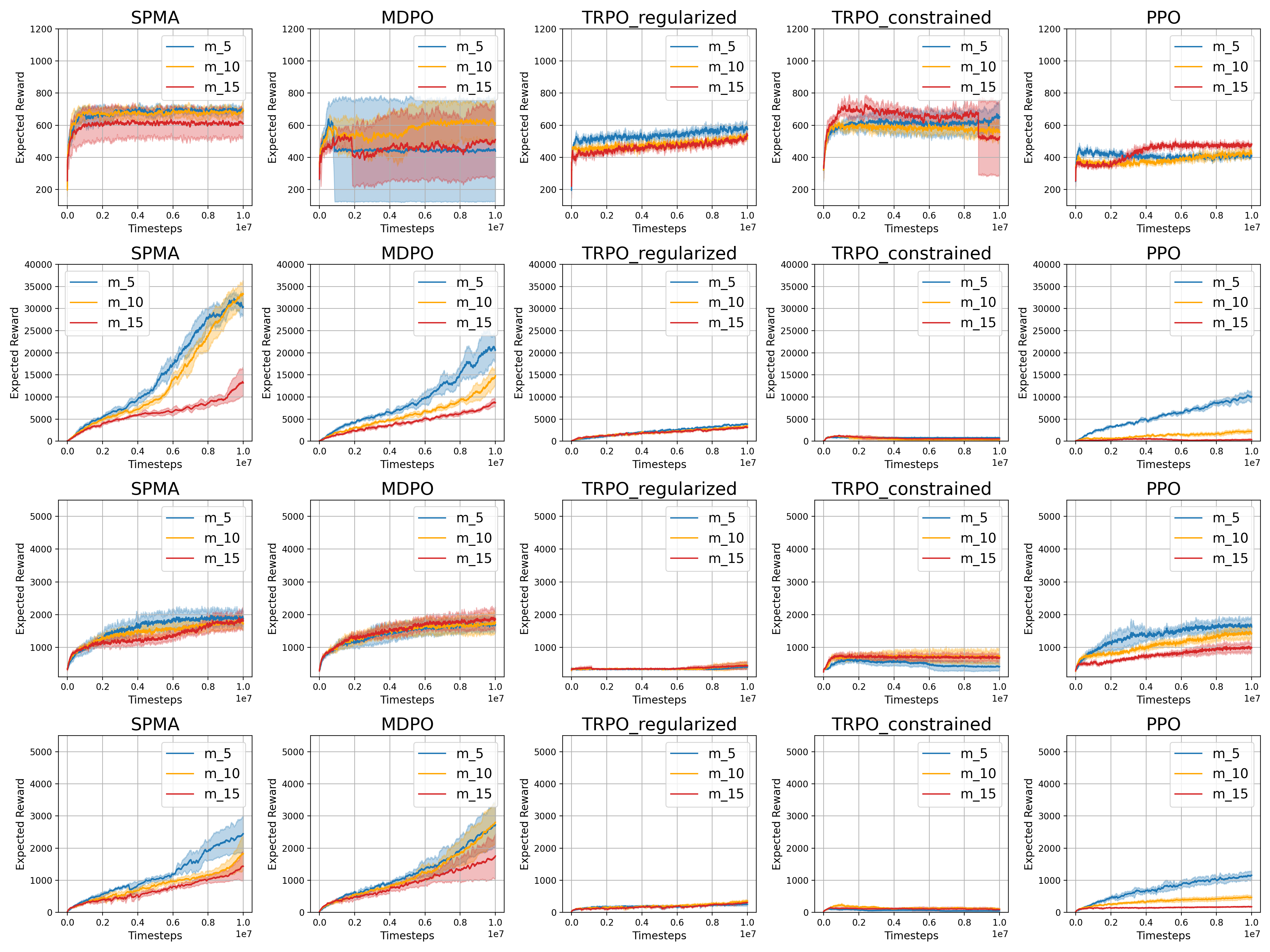}
\caption{Atari ablation on $m$: The rows correspond to the BeamRider-v4, DemonAttack-v4, Alien-v4, and Amidar-v4 games. We observe that increasing $m$ does not necessarily improve results across methods.}
\label{fig:atari-m-ablation}
\end{figure}

\end{document}